\def\eqref#1{equation~\ref{#1}}
\def\1{\bm{1}}
\DeclareMathAlphabet{\mathsfit}{\encodingdefault}{\sfdefault}{m}{sl}
\SetMathAlphabet{\mathsfit}{bold}{\encodingdefault}{\sfdefault}{bx}{n}
\newtheoremstyle{mystyle}%                % Name
  {}%                                     % Space above
  {}%                                     % Space below
  {\itshape}%                                     % Body font
  {}%                                     % Indent amount
  {\bfseries}%                            % Theorem head font
  {.}%                                    % Punctuation after theorem head
  { }%                                    % Space after theorem head, ' ', or \newline
  {\thmname{#1}\thmnumber{ #2}\thmnote{ (#3)}}%                                     % Theorem head spec (can be left empty, meaning `normal')
\theoremstyle{mystyle}
\newtheorem{theorem}{Theorem}
\newtheorem*{theorem*}{Theorem}
\newtheorem{prop}{Proposition}
\newtheorem*{prop*}{Proposition}
\definecolor{myblue}{HTML}{3274B5}
\definecolor{mydarkblue}{rgb}{0,0.08,0.45}
\definecolor{mygray}{gray}{0.4}
\title{LEAD: Min-Max Optimization from a Physical Perspective}
\author{\name Reyhane Askari Hemmat\footnotemark[1]{\thanks{Equal Contribution. Significant part of this work was done while Amartya Mitra was interning at Mila.}} \email reyhane.askari.hemmat@umontreal.ca \\
       \addr Department of Computer Science and Operations Research\\
       University of Montreal and Mila, Quebec AI Institute
       \AND
       \name Amartya Mitra\footnotemark[1] \email amitr003@ucr.edu \\
       \addr Department of Physics and Astronomy\\
       University of California, Riverside
       \AND
       \name Guillaume Lajoie \email g.lajoie@umontreal.ca \\
       \addr Department of Mathematics and Statistics\\
       University of Montreal and Mila, Quebec AI Institute\\
       Canada CIFAR AI chair
       \AND
       \name Ioannis Mitliagkas\email ioannis@iro.umontreal.ca \\
       \addr Department of Computer Science and Operations Research\\
       University of Montreal and Mila, Quebec AI Institute\\
       Canada CIFAR AI chair
       }
\begin{document}

\maketitle

\begin{abstract}
Adversarial formulations such as generative adversarial networks (GANs) have rekindled interest in two-player min-max games. A central obstacle in the optimization of such games is the rotational dynamics that hinder their convergence. In this paper, we show that game optimization shares dynamic properties with particle systems subject to multiple forces, and one can leverage tools from physics to improve optimization dynamics. Inspired by the physical framework, we propose LEAD, an optimizer for min-max games. Next, using Lyapunov stability theory and spectral analysis, we study LEAD’s convergence properties in continuous and discrete time settings for a class of quadratic min-max games to demonstrate linear convergence to the Nash equilibrium. Finally, we empirically evaluate our method on synthetic setups and CIFAR-10 image generation to demonstrate improvements in GAN training.

\end{abstract}

\section{Introduction}
Much of the advances in traditional machine learning can be attributed to the success of gradient-based methods. Modern machine learning systems such as GANs \citep{goodfellow2014generative}, multi-task learning, and multi-agent settings~\citep{sener2018multi} in reinforcement learning~\citep{bu2008comprehensive} require joint optimization of two or more objectives which can often be formulated as games. In these \textit{game} settings, best practices and methods developed for single-objective optimization are observed to perform noticeably poorly~\citep{mescheder2017numerics, balduzzi2018mechanics, gidel2019negative}. Specifically, they exhibit rotational dynamics in parameter space about the \textit{Nash Equilibria}~\citep{mescheder2017numerics}, slowing down convergence. Recent work in game optimization~\citep{wang2019solving, mazumdar2019finding, mescheder2017numerics, balduzzi2018mechanics,abernethy2019last,loizou2020stochastic} demonstrates that introducing additional second-order terms in the optimization algorithm helps to suppress these rotations, thereby improving convergence. 

Taking inspiration from recent work in single-objective optimization that re-derives existing accelerated methods from a variational perspective~\citep{wibisono2016variational, wilson2016lyapunov}, in this work, we adopt a similar approach in the context of games. To do so, we borrow formalism from physics by likening the gradient-based optimization of two-player (zero-sum) games to the dynamics of a system where we introduce relevant forces that helps curb these rotations. We consequently utilize the dynamics of this resultant system to propose our novel second-order optimizer for games, \textit{LEAD}.

Next, using Lyapunov and spectral analysis, we demonstrate linear convergence of our optimizer (LEAD) in both continuous and discrete-time settings for a class of quadratic min-max games. In terms of empirical performance, LEAD achieves an FID of 10.49 on CIFAR-10 image generation, outperforming existing baselines such as BigGAN~\citep{brock2018large}, which is approximately 30-times larger than our baseline ResNet architecture.

What distinguishes LEAD from other second-order optimization methods for min-max games such as~\cite{mescheder2017numerics, wang2019solving, mazumdar2019finding, schafer2019competitive} is its computational complexity. All these different methods involve Jacobian (or Jacobian-inverse) vector-product computation commonly implemented using a form of approximation.
% \footnote{~\cite{wang2019solving, schafer2019competitive} propose a conjugate-gradient approximation of the Jacobian-inverse to reduce the computational cost, though still performing poorly in a neural network setting. Additionally, their provided proofs of convergence rely on the expensive exact computation of the inverse.}.
Thus making a majority of them intractable in real-world large scale problems. On the other hand, LEAD involves computing only \textit{one-block} of the full Jacobian of the gradient vector-field multiplied by a vector. This makes our method significantly cheaper and comparable to several first-order methods, as we show in~\cref{sec:compute_comparison}. 
\vspace{0.1cm}
We summarize our contributions below:
\begin{itemize}
\setlength\itemsep{1em}
    \item In~\cref{sec:mechanics}, we model gradient descent-ascent as a physical system. Armed with the physical model, we introduce counter-rotational forces to curb the existing rotations in the system. Next, we employ the principle of least action to determine the (continuous-time) dynamics. We then accordingly discretize these resultant dynamics to obtain our optimization scheme, Least Action Dynamics (LEAD).
    % two novel second-order methods for games: implicit LEast Action Dynamics (iLEAD) and symplectic LEast Action Dynamics (sLEAD). We subsequently use iLEAD for our convergence analysis while utilizing sLEAD for our empirical study.
    \item In~\cref{sec:convergence}, we use Lyapunov stability theory and spectral analysis to prove a linear convergence of LEAD in continuous and discrete-time settings for quadratic min-max games. 
    % To the best of our knowledge, this is the first example of using Lyapunov stability analysis in game optimization.
    \item Finally, in~\cref{sec:experiments}, we empirically demonstrate that LEAD is computationally efficient. Additionally, we demonstrate that LEAD improves the performance of GANs on different tasks such as 8-Gaussians and CIFAR-10 while comparing the performance of our method against other first and second-order methods.
    \item The source code for all the experiments is available at~\url{https://github.com/lead-minmax-games/LEAD}. Furthermore, we provide a blog post that summarizes our work, which is also available at~\url{https://reyhaneaskari.github.io/LEAD.html}.
    % See Appendix~\ref{app:experiment} for more details.
\end{itemize}

\section{Problem Setting}
\paragraph{Notation}
Continuous time scalar variables are in uppercase letters ($X$), discrete-time scalar variables are in lower case ($x$) and vectors are in boldface (\textbf{A}). Matrices are in blackboard bold ($\mathbb{M}$) and derivatives w.r.t. time are denoted as an over-dot ($\dot{x}$). \textcolor{black}{Furthermore, $\text{off-diag}[\mathbb{M}]_{i,j}$ is equal to $\mathbb{M}_{i,j}$ for $i \neq j$, and equal to $0$ for $i=j$ where $i, j = 1, 2, \ldots, n$.}

\paragraph{Setting} In this work, we study the optimization problem of two-player zero-sum games,
\begin{equation} \label{eq:min_max}
    \min_{X} \max_{Y} f\left(X, Y\right),
\end{equation}
where $f : \mathds{R}^n \times \mathds{R}^n \rightarrow \mathds{R}$, and is assumed to be a convex-concave function which is continuous and twice differentiable w.r.t.\ $X, Y\in \mathds{R}$. It is to be noted that though in developing our framework below, $X$, $Y$ are assumed to be scalars, it is nevertheless found to hold for the more general case of vectorial $X$ and $Y$, as we demonstrate both analytically (Appendix~\ref{app:general_bi_ly}) and empirically, our theoretical analysis is found to hold. %Additionally, we assume $f\left(X, Y\right)$ to be convex-concave, with the Nash Equilibrium of the game occurring at $X = 0, Y = 0$.

\section{Optimization Mechanics}\label{sec:mechanics}
In our effort to study min-max optimization from a physical perspective, 
% find an efficient update scheme or trajectory to optimize the min-max objective $f\left(X, Y\right)$,
we note from classical physics the following: under the influence of a net force $F$, the equation of motion of a physical object of mass $m$, is determined by Newton's 2\textsuperscript{nd} Law,
\begin{equation}\label{eq:newtons_law}
    m\ddot{X} = F,
\end{equation}
with the object's coordinate expressed as $X_t \equiv X$. According to the \textit{principle of least action}\footnote{Also referred to as the Principle of Stationary Action.}~\citep{landau1960course}, nature ``selects'' this particular trajectory over other possibilities, as a quantity called the \textit{action} is extremized along it. 

We start with a simple observation that showcases the connection between optimization algorithms and physics. Polyak's heavy-ball momentum~\citep{polyak1964some} is often perceived from a physical perspective as a ball moving in a "potential" well (cost function). In fact, it is straightforward to show that Polyak momentum is a discrete counterpart of a continuous-time equation of motion governed by Newton's 2\textsuperscript{nd} Law.
% one can model Polayk momentum in continuous-time using the principle of least action and Newton's 2\textsuperscript{nd} Law.
For single-objective minimization of an objective function $f\left(x\right)$, Polyak momentum follows:

\begin{equation}\label{eq:single_obj_dis}
    x_{k + 1} = x_k + \beta \left(x_k - x_{k-1}\right) - \eta\nabla_x f\left(x_k\right),
\end{equation}
where $\eta$ is the learning rate and $\beta$ is the momentum coefficient. For simplicity, setting $\beta$ to one, and moving to continuous time, one can rewrite this equation as,
\begin{equation}\label{eq:proof_momentum_2}
    \begin{split}
        \frac{\left(x_{k + \delta} - x_k\right) - \left(x_k - x_{k - \delta}\right)}{\delta^2} = - \frac{\eta}{\delta^2}\nabla_x f\left(x_k\right),
    \end{split}
\end{equation}
and in the limit $\delta,\eta\to0$, Eq.(\ref{eq:proof_momentum_2}) then becomes ($x_k\to X\left(t\right)\equiv X$),
\begin{equation}\label{eq:single_obj_cont}
    m\ddot{X} = -\nabla_{X} f\left(X\right).
\end{equation}
This is equivalent to Newton's 2\textsuperscript{nd} Law of motion (Eq.(\ref{eq:newtons_law})) of a particle of mass $m = \delta^2/\eta$, and identifying $F = -\nabla_X f\left(X\right)$ (i.e. $f\left(X\right)$ acting as a \textit{potential} function~\citep{landau1960course}). Thus, Polyak's heavy-ball method Eq.(\ref{eq:single_obj_dis}) can be interpreted as an object (ball) of mass $m$ rolling down under a potential $f\left(X\right)$ to reach the minimum while accelerating. 

Armed with this observation, we perform an extension of \ref{eq:single_obj_cont} to our min-max setup,
\begin{equation}\label{eq:vortex_eqns}
    \begin{split}
        m\ddot{X} &= -\nabla_{X} f\left(X, Y\right),\\
        m\ddot{Y} &= \nabla_{Y} f\left(X, Y\right),
    \end{split}
\end{equation}
which represents the dynamics of an object moving under a \textit{curl force}~\citep{berry2016curl}: ${\bm{F}}_\text{curl} = \left(-\nabla_{X} f, \nabla_{Y} f\right)$ in the 2-dimensional $X-Y$ plane, \textcolor{black}{see Figure \ref{fig:foces_examples} (a) for a simple example where the continuous-time dynamic of a curl force (or equivalently a vortex force) diverges away from the equilibrium.}

Furthermore, it is to be noted that discretization of Eq.(\ref{eq:vortex_eqns}) corresponds to Gradient Descent-Ascent (GDA) with momentum 1. Authors in ~\citep{gidel2019negative} found that this optimizer is divergent in the prototypical min-max objective, $f\left(X, Y\right) = XY$, thus indicating the need for further improvement.

To this end, we note that the failure modes of the optimizer obtained from the discretization of Eq.(\ref{eq:vortex_eqns}), can be attributed to: \textit{(a)} an outward rotatory motion by our particle of mass $m$, accompanied by \textit{(b)} an increase in its velocity over time. Following these observations, we aim to introduce suitable \textit{counter-rotational} and \textit{dissipative} forces to our system above, in order to tackle \textit{(a)} and \textit{(b)} in an attempt to achieve converging dynamics. 

Specifically, as an initial consideration, we choose to add to our system, two ubiquitous forces:
\begin{itemize}
  \item magnetic force,
    \begin{equation}\label{eq:mag_force}
        {\bm{F}}_\text{mag} = \Big(-q\nabla_{XY}f\ \dot{Y}, q\nabla_{XY}f\ \dot{X}\Big)
    \end{equation}
    known to produce rotational motion (in charged particles), to counteract the rotations introduced by ${\bm{F}}_\text{curl}$. Here, $q$ is the charge imparted to our particle.
  \item friction,
    \begin{equation}
        {\bm{F}}_\text{fric} = \big(\mu\dot{X}, \mu\dot{Y}\big)
    \end{equation}
    to prevent the increase in velocity of our particle ($\mu$: coefficient of friction).
\end{itemize}
Assimilating all the above forces $\bm{F}_\text{curl}$, $\bm{F}_\text{mag}$ and $\bm{F}_\text{fric}$, the equations of motion (EOMs) of our crafted system then becomes,
\begin{equation}\label{eq:EOM_forces}
    \begin{split}
        m\ddot{X} &= {\bm{F}}_\text{curl} + {\bm{F}}_\text{mag} + {\bm{F}}_\text{fric}, \\
        m\ddot{Y} &= {\bm{F}}_\text{curl} + {\bm{F}}_\text{mag} + {\bm{F}}_\text{fric}. \\
    \end{split}
\end{equation}
Or equivalently,
\begin{equation}\label{eq:EOM}
    \begin{split}
        m\ddot{X} &= - \mu \dot{X} - \nabla_{X}f - q\nabla_{X Y}f\dot{Y}, \\
        m\ddot{Y} &= - \mu \dot{Y} + \nabla_{Y}f + q\nabla_{X Y}f\dot{X}. \\
    \end{split}
\end{equation}
\noindent Without loss of generality, from hereon we set the mass of our object to be unity. In the rest of this work, we study the above EOMs in continuous and discrete time for min-max games.
\textcolor{black}{See Figure \ref{fig:foces_examples} (c) for a simple example where the continuous-time dynamic of a particle under all the above forces converges to the equilibrium.}

\begin{figure}[h!]
\centering
      \includegraphics[width=0.9\linewidth]{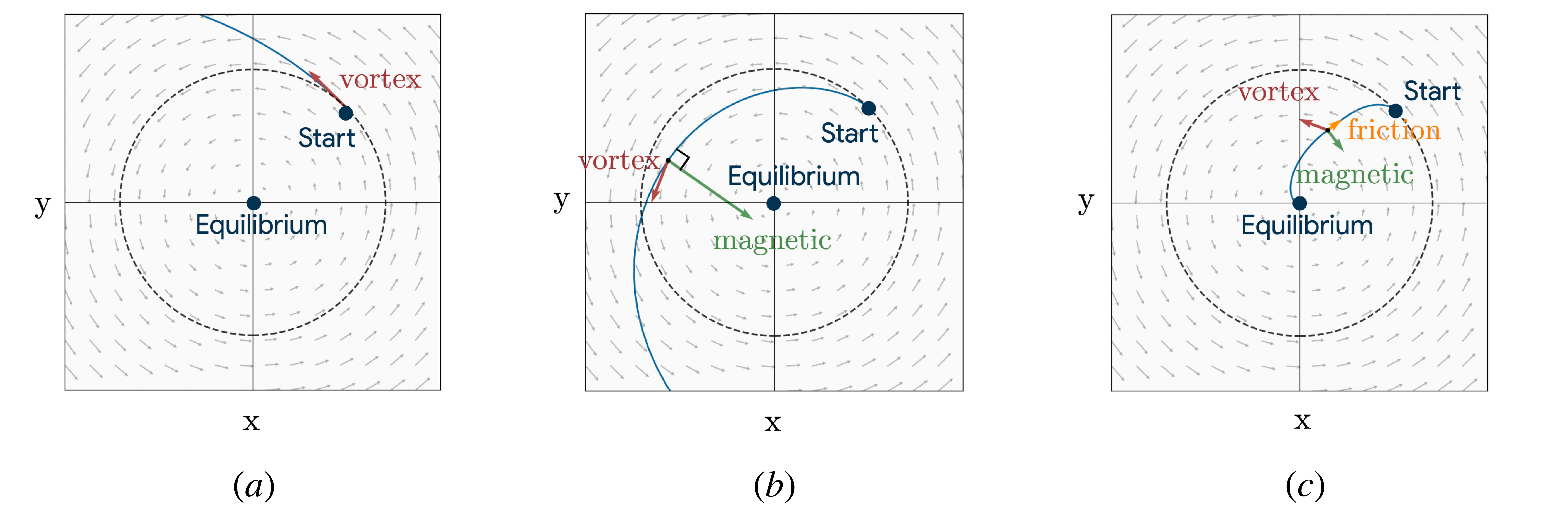}
        \caption{\textcolor{black}{
        Depiction of the continuous-time dynamics of a particle experiencing different forces. In (a), a particle inside a vortex diverges from the equilibrium. This corresponds to the continuous-time dynamics of gradient descent-ascent with momentum in the bilinear game. In (b), we introduce a counter-rotational magnetic force to the existing system with the vortex force, assuming the particle is charged. The magnetic force is exerted in a perpendicular direction to the current direction of the particle and affects the rotations induced by the vortex. However, we do not observe convergence, which is expected from a physics perspective. The vortex force is known to increase the particle's speed over time  \cite{berry2016curl}, while the magnetic force does not affect the particle's speed. Therefore, the particle's velocity will keep increasing over time, preventing convergence. To decrease the particle's velocity for convergence, we introduce friction to the system in (c). As a result, we observe that the friction causes the particle to lose speed and converge.
        }}
\label{fig:foces_examples}
\end{figure}

\subsection{Discretization}\label{sec:discretization}
With the continuous-time trajectory of Eq.(\ref{eq:EOM}) in hand, we now proceed to discretize it using a combination of Euler's implicit and explicit discretization schemes. To discretize $\dot{X} = V_{X}$ we have,
\begin{equation}\label{eq:discretization schemes}
    \begin{split}
        \text{Euler Implicit Discretization}:\ x_{k + 1} - x_k &= {\delta}v^x_{k + 1} \\
        \text{Euler Explicit Discretization}:\ x_{k + 1} - x_k &= {\delta}v^x_k. \\
    \end{split}
\end{equation}
where $\delta$ is the discretization step-size and $k$ is the iteration step.
% Furthermore, additional optimization schemes can be obtained using other forms of discretization, see Appendix \ref{app:implicit}.

\begin{prop}\label{prop:eom_discrete}
The continuous-time EOMs~(\ref{eq:EOM}) can be discretized in an implicit-explicit way, to yield,
\begin{equation}\label{eq:eom_discrete}
    \begin{split}
        x_{k + 1} &= x_k + \beta (x_k - x_{k-1}) - \eta \nabla_{x} f\left(x_k, y_k\right)\\ &- \alpha \nabla_{xy}f\left(x_k, y_k\right) (y_k - y_{k-1}), \\
        y_{k + 1} &= y_k + \beta (y_k - y_{k-1}) + \eta \nabla_{y} f\left(x_k, y_k\right) \\ &+ \alpha \nabla_{yx}f\left(x_k, y_k\right) (x_k - x_{k-1}), \\
    \end{split}
\end{equation}
\end{prop}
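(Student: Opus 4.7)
My plan is to reduce the second-order system in Eq.(\ref{eq:EOM}) to a first-order one by introducing velocity variables $V^X=\dot X$, $V^Y=\dot Y$, then apply the mixed implicit/explicit rules of Eq.(\ref{eq:discretization schemes}) term by term, and finally read off the coefficients $\beta,\eta,\alpha$ from the physical parameters $\mu,q,\delta$.

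Concretely, the first step is to rewrite Eq.(\ref{eq:EOM}) (with $m=1$) as the autonomous first-order system
\begin{equation*}
\begin{aligned}
\dot X &= V^X, & \dot V^X &= -\mu V^X - \nabla_X f - 2q\,(\nabla_{XY}f)\,V^Y, \\
\dot Y &= V^Y, & \dot V^Y &= -\mu V^Y + \nabla_Y f + 2q\,(\nabla_{XY}f)\,V^X.
\end{aligned}
\end{equation*}
Next, I would apply the \emph{implicit} rule of Eq.(\ref{eq:discretization schemes}) to the position equations, so that $v^x_{k+1}=(x_{k+1}-x_k)/\delta$ and, shifting the index, $v^x_k=(x_k-x_{k-1})/\delta$, and similarly for $y$. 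This step is what turns the scheme into a two-step recursion in positions only and eliminates the auxiliary velocity variables from the final update.

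Then I would discretize the velocity ODEs \emph{explicitly}, evaluating the forces at the current iterate $(x_k,y_k,v^x_k,v^y_k)$:
\begin{equation*}
v^x_{k+1}-v^x_k = -\delta\mu\,v^x_k -\delta\,\nabla_x f(x_k,y_k) - 2q\delta\,\nabla_{xy}f(x_k,y_k)\,v^y_k,
\end{equation*}
and analogously for $v^y$. Substituting the implicit identifications $v^x_k=(x_k-x_{k-1})/\delta$, $v^y_k=(y_k-y_{k-1})/\delta$, multiplying through by $\delta$, and collecting terms yields
\begin{equation*}
x_{k+1}=x_k+(1-\delta\mu)(x_k-x_{k-1})-\delta^2\nabla_x f(x_k,y_k)-2q\delta\,\nabla_{xy}f(x_k,y_k)(y_k-y_{k-1}),
\end{equation*}
and the symmetric update for $y_{k+1}$. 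Reading off coefficients gives the claimed form with $\beta=1-\delta\mu$, $\eta=\delta^2$, and $\alpha=2q\delta$, which completes the proof.

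The step requiring the most care is the \emph{pairing} of implicit versus explicit rules: if one instead applies the explicit rule to the position equation and the implicit rule to the velocity equation, the resulting scheme involves $\nabla_x f(x_{k+1},y_{k+1})$ and $\nabla_{xy}f(x_{k+1},y_{k+1})$, which is not the update in Eq.(\ref{eq:eom_discrete}) and is generally not tractable for arbitrary $f$. So the content of the proposition is precisely this particular choice of pairing, and I would emphasize (as the authors already note in the remark before the proposition) that this is ``symplectic-style'' rather than strictly symplectic, since the continuous system Eq.(\ref{eq:EOM}) is itself dissipative because of the friction term. Appendix~\ref{app:implicit} handles the alternative pairings.
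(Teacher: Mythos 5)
Your proposal is correct and follows essentially the same route as the paper's proof in Appendix~\ref{app:symplectic}: introduce velocities, discretize the position equations implicitly and the velocity equations explicitly, then substitute $\delta v^x_{k}=x_k-x_{k-1}$ and $\delta v^x_{k+1}=x_{k+1}-x_k$ to eliminate the velocities and read off $\beta=1-\mu\delta$, $\eta=\delta^2$, $\alpha\propto q\delta$. The only cosmetic difference is the factor of $2$ in $\alpha$, which the paper absorbs by silently rescaling $2q\to q$ in its appendix.
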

\noindent where we have defined $\alpha = 2 {q} \delta, \beta = 1 - {\mu} \delta$ and $\eta = {\delta}^2$ (Proof in Appendix \ref{app:symplectic}).
\\
Taking inspiration from the fact that Eq.~\ref{eq:EOM} corresponds to the trajectory of a charged particle under a curl, magnetic and frictional force, as governed by the principle of least action, we refer to the discrete update rules of Eq.~\ref{eq:eom_discrete} as \textit{Least Action Dynamics (LEAD}). (Algorithm~\ref{alg:LEAD} details the pseudo-code of LEAD).

\textbf{Understanding the terms in LEAD}: Analyzing our novel optimizer, we note that it consist of three types of terms, namely,
\begin{enumerate}
    \item \textbf{Gradient Descent or Ascent}: $-\nabla_x f$ or $\nabla_y f$: Each player's immediate direction of improving their own objective.
    \item \textbf{Momentum}: Standard Polyak momentum term; known to accelerate convergence in optimization and recently in smooth games.~\citep{gidel2019negative,azizian2020accelerating, lorraine2022complex}
    \item \textbf{Coupling term}: 
    \begin{equation*}
        -\nabla_{xy} f\left(x_k, y_k\right) (y_k - y_{k-1}), \nabla_{yx} f\left(x_k, y_k\right) (x_k - x_{k-1})
    \end{equation*}
    Main new term in our method. It captures the first-order interaction between players.
    % Each player's awareness of the non-stationarity of its opponent.
    This cross-derivative corresponds to the counter-rotational force in our physical model; it allows our method to exert control on rotations.
    % corresponds to the counter-rotations in the discretized setting and is responsible for improving convergence
    \\
\end{enumerate}
% In the above Propositions, the $\beta$'s and $\eta$'s play the role of momentum and learning rate respectively. while $\alpha$'s corresponds to the counter-rotational strength.
\begin{algorithm}
	\caption{Least Action Dynamics (LEAD)}\label{alg:LEAD}
	\label{NSA}
	\begin{algorithmic}
		\State \textbf{Input}:
              learning rate $\eta$, momentum $\beta$, coupling coefficient $\alpha$.
        \State \textbf{Initialize: }{$x_0 \leftarrow x_{init}$,\  $y_0 \leftarrow y_{init}$, \ $t \leftarrow 0$}
		\While{not converged}
		\State $t \leftarrow t+1$
        \State $g_x \leftarrow \nabla_x f(x_t, y_t)$
        \State $g_{xy} \Delta y_t \leftarrow \nabla_y(g_x) (y_t - y_{t-1})$
        \State $x_{t+1} \leftarrow x_t +  \beta (x_t - x_{t-1}) - \eta g_x - \alpha g_{xy} \Delta y_t$
        \State $g_y \leftarrow \nabla_y f(x_t, y_t)$
        \State $g_{xy} \Delta x_t \leftarrow \nabla_x(g_y) (x_t - x_{t-1})$
        \State $y_{t+1} \leftarrow y_t + \beta (y_t - y_{t-1}) + \eta g_y + \alpha g_{xy} \Delta x_t$
		\EndWhile
		
		\State \textbf{return} $(x_{k+1}, y_{k+1})$
	\end{algorithmic}
\end{algorithm}

\section{Convergence Analysis}\label{sec:convergence}
We now study the behavior of LEAD on the quadratic min-max game,
\begin{equation}\label{eq:bilinear}
    \begin{split}
       f\left(\mathbf{X},\mathbf{Y}\right) = \frac{h}{2}||\mathbf{X}||^2 - \frac{h}{2}||\mathbf{Y}||^2 + \mathbf{X}^T\mathbb{A}\mathbf{Y}
    \end{split}
\end{equation}
where $\mathbf{X}, \mathbf{Y} \in \mathds{R}^n$, $\mathbb{A}\in\mathds{R}^n\times\mathds{R}^n$ is a (constant) coupling matrix and $h$ is a scalar constant. Additionally, the Nash equilibrium of the above game lies at $\mathbf{X}^{*} = 0 , \mathbf{Y}^{*} = 0$. Let us further define the \textit{vector field} $\mathbf{v}$ of the above game, $f$, as,
\begin{equation}\label{eq:vector-field}
\bm{v} = 
\begin{bmatrix}
    \nabla_{\mathbf{X}}  f\left(\mathbf{X}, \mathbf{Y}\right)\\
    -\nabla_{\mathbf{Y}}  f\left(\mathbf{X}, \mathbf{Y}\right) \\
\end{bmatrix}
= \begin{bmatrix}
    % \mathbf{H}^\top x + \mathbf{A} y \\
    % -\mathbf{A}^\top x + \mathbf{G}^\top y\\
    h\bm{X} + \mathbb{A} \mathbf{Y} \\
    h\bm{Y} - \mathbb{A}^\top \mathbf{X} \\
\end{bmatrix}.
\end{equation}

\subsection{Continuous Time Analysis}
A general way to prove the stability of a dynamical system is to use a Lyapunov function \citep{hahn1963theory, lyapunov1992general}. A scalar function $\mathcal{E}_t : \mathds{R}^n \times \mathds{R}^n \rightarrow \mathds{R}$, is a Lyapunov function of a continuous-time dynamics if $\forall\ t$, 
\begin{align*}
    & \left(i\right)\ \mathcal{E}_t (\mathbf{X}, \mathbf{Y}) \ge 0, \\
    & \left(ii\right)\ \dot{\mathcal{E}}_t (\mathbf{X}, \mathbf{Y}) \le 0
\end{align*}

The Lyapunov function $\mathcal{E}_t$ can be perceived as a generalization of the total energy of the system and the requirement $(ii)$ ensures that this generalized energy decreases along the trajectory of evolution, leading the system to convergence as we will show next.

For the quadratic min-max game defined in Eq.(\ref{eq:bilinear}), Eq.(\ref{eq:EOM}) generalizes to,
\begin{equation}\label{eq:gen_bi_strong}
    \begin{split}
        \ddot{{\bm{X}}} &= - \mu \dot{{\bm{X}}} - (h + \mathbb{A}){\bm{Y}} - q \mathbb{A}\dot{{\bm{Y}}} \\
        \ddot{{\bm{Y}}} &= - \mu \dot{{\bm{Y}}} - (h - \mathbb{A}^T){\bm{X}} + q\mathbb{A}^T\dot{\bm{X}}, \\
    \end{split}
\end{equation}
\begin{theorem}\label{th:cont_lyapunov}
For the dynamics of Eq.(\ref{eq:gen_bi_strong}),
\begin{equation}\label{eq:lyapunov_gen_bi}
    \begin{split}
        \mathcal{E}_t &= \frac{1}{2}\left(\dot{\bm{X}} + \mu {\bm{X}} + \mu\mathbb{A} {\bm{Y}}\right)^T\left(\dot{\bm{X}} + \mu {\bm{X}} + \mu \mathbb{A} {\bm{Y}}\right) \\
        & + \frac{1}{2}\left(\dot{\bm{Y}} + \mu {\bm{Y}} - \mu \mathbb{A}^T {\bm{X}}\right)^T\left(\dot{\bm{X}} + \mu {\bm{Y}} - \mu \mathbb{A}^T {\bm{X}}\right) \\
        & + \frac{1}{2}\left(\dot{\bm{X}}^T\dot{\bm{X}} + \dot{\bm{Y}}^T\dot{\bm{Y}}\right) + {\bm{X}}^T (h + \mathbb{A}\mathbb{A}^T) {\bm{X}} + {\bm{y}}^T (h + \mathbb{A}^T\mathbb{A}) {\bm{Y}}
        %\\ &\ge 0\ \forall\ t
    \end{split}
\end{equation}
is a Lyapunov function of the system. \\
\\
Furthermore, setting $q = \left(2/\mu\right) + \mu$, we find $\dot{\mathcal{E}}_t \le -\rho\mathcal{E}_t$
for
\begin{equation*}
\rho \le \min\left\{\frac{\mu}{1 + \mu},\ \frac{2 \mu (\sigma_{\min}^2 + h)}{\left(1 + \sigma_{\min}^2 + 2h\right)\left(\mu^2 + \mu\right) + 2\sigma_{\min}^2}\right\}
\end{equation*}
\\
 with $\sigma_{\min}$ being the smallest singular value of $\mathbb{A}$. This consequently ensures linear convergence of the dynamics of Eq.~\ref{eq:gen_bi_strong},
\begin{equation}\label{eq:ly_c}
    \boxed{||\bm{X}||^2 + ||\bm{Y}||^2 \le \frac{\mathcal{E}_{0}}{h + \sigma_\text{min}^2}\exp{\left(-\rho t\right)}}.
\end{equation}
\end{theorem}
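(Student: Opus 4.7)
The plan is to verify that $\mathcal{E}_t$ satisfies the two Lyapunov conditions, then upgrade condition $(ii)$ to the strong form $\dot{\mathcal{E}}_t \le -\rho\mathcal{E}_t$ so that Grönwall's inequality yields exponential decay, and finally lower-bound $\mathcal{E}_t$ by $\|\mathbf{X}\|^2+\|\mathbf{Y}\|^2$ (up to the factor $\sigma_{\min}^2$) to conclude the stated convergence rate.

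For non-negativity $(i)$, every summand in $\mathcal{E}_t$ is of the form $\mathbf{u}^T\mathbf{u}$ or $\mathbf{X}^T\mathbb{A}\mathbb{A}^T\mathbf{X}$ or $\mathbf{Y}^T\mathbb{A}^T\mathbb{A}\mathbf{Y}$, each of which is a positive semidefinite quadratic form, so $\mathcal{E}_t\ge 0$ is immediate. For $(ii)$, I would differentiate $\mathcal{E}_t$ term by term along trajectories of~(\ref{eq:gen_bi_strong}). Concretely, writing $\mathbf{u}=\dot{\mathbf{X}}+\mu\mathbf{X}+\mu\mathbb{A}\mathbf{Y}$ and $\mathbf{w}=\dot{\mathbf{Y}}+\mu\mathbf{Y}-\mu\mathbb{A}^T\mathbf{X}$, the first two blocks give $\mathbf{u}^T(\ddot{\mathbf{X}}+\mu\dot{\mathbf{X}}+\mu\mathbb{A}\dot{\mathbf{Y}})+\mathbf{w}^T(\ddot{\mathbf{Y}}+\mu\dot{\mathbf{Y}}-\mu\mathbb{A}^T\dot{\mathbf{X}})$; substituting $\ddot{\mathbf{X}}$ and $\ddot{\mathbf{Y}}$ from~(\ref{eq:gen_bi_strong}) cancels all $\dot{\mathbf{X}},\dot{\mathbf{Y}}$ contributions of order $\mu$ and leaves expressions linear in $\mathbb{A}\mathbf{Y}$, $\mathbb{A}^T\mathbf{X}$, $\mathbb{A}\dot{\mathbf{Y}}$, $\mathbb{A}^T\dot{\mathbf{X}}$ with coefficient $(\mu-q)$. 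The magic of the prescribed tuning $q=2/\mu+\mu$ is that $\mu-q=-2/\mu$, which is exactly what one needs to absorb the mixed terms coming from the remaining blocks $\tfrac12(\dot{\mathbf{X}}^T\dot{\mathbf{X}}+\dot{\mathbf{Y}}^T\dot{\mathbf{Y}})+\mathbf{X}^T\mathbb{A}\mathbb{A}^T\mathbf{X}+\mathbf{Y}^T\mathbb{A}^T\mathbb{A}\mathbf{Y}$, whose derivatives produce $\dot{\mathbf{X}}^T\ddot{\mathbf{X}}+\dot{\mathbf{Y}}^T\ddot{\mathbf{Y}}+2\dot{\mathbf{X}}^T\mathbb{A}\mathbb{A}^T\mathbf{X}+2\dot{\mathbf{Y}}^T\mathbb{A}^T\mathbb{A}\mathbf{Y}$.

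The main obstacle is the next algebraic step: after all substitutions, $\dot{\mathcal{E}}_t$ will be a quadratic form in $(\mathbf{X},\mathbf{Y},\dot{\mathbf{X}},\dot{\mathbf{Y}})$ mixing $\mathbb{A}$ and $\mathbb{A}^T$. I would diagonalise it by passing to the singular-value basis: write $\mathbb{A}=\mathbb{U}\Sigma\mathbb{V}^T$, set $\tilde{\mathbf{X}}=\mathbb{U}^T\mathbf{X}$, $\tilde{\mathbf{Y}}=\mathbb{V}^T\mathbf{Y}$, and observe that the dynamics~(\ref{eq:gen_bi_strong}) decouple into $n$ independent $2$-dimensional systems, one per singular value $\sigma_i$ of $\mathbb{A}$. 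In each such mode, both $\mathcal{E}_t$ and $\dot{\mathcal{E}}_t$ reduce to scalar quadratic forms in $(\tilde{X}_i,\tilde{Y}_i,\dot{\tilde{X}}_i,\dot{\tilde{Y}}_i)$, and proving $\dot{\mathcal{E}}_t\le -\rho\,\mathcal{E}_t$ amounts to showing that the symmetric matrix $\rho\,M_{\mathcal{E},i}+M_{\dot{\mathcal{E}},i}$ is negative semidefinite for every $\sigma_i$. A routine Schur-complement or direct principal-minor computation then produces the worst-case constraint on $\rho$; optimizing over $\sigma_i\in[\sigma_{\min},\sigma_{\max}]$ yields the two bounds $\rho\le \mu/(1+\mu)$ and $\rho\le 2\mu\sigma_{\max}^2/\bigl((1+\sigma_{\max}^2)(\mu^2+\mu)+2\sigma_{\max}^2\bigr)$ asserted in the theorem (the tightest case for each factor comes from different ends of the singular-value spectrum).

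Finally, with $\dot{\mathcal{E}}_t\le -\rho\,\mathcal{E}_t$ in hand, Grönwall's lemma immediately gives $\mathcal{E}_t\le \mathcal{E}_0\exp(-\rho t)$. To translate this into a bound on $\|\mathbf{X}\|^2+\|\mathbf{Y}\|^2$, I drop every non-negative summand in $\mathcal{E}_t$ except $\mathbf{X}^T\mathbb{A}\mathbb{A}^T\mathbf{X}+\mathbf{Y}^T\mathbb{A}^T\mathbb{A}\mathbf{Y}$, which is lower-bounded by $\sigma_{\min}^2(\|\mathbf{X}\|^2+\|\mathbf{Y}\|^2)$; dividing through yields~(\ref{eq:ly_c}). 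The bulk of the effort in the actual write-up will be the diagonalisation-and-Schur-complement step, which I would likely defer to an appendix since the algebra is straightforward once the decoupling is made explicit.
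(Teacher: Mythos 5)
Your proposal is correct and follows essentially the same route as the paper's proof (Appendices \ref{app:general_bi_ly} and \ref{app:general_bi_ly_r}): direct differentiation of $\mathcal{E}_t$ along trajectories, with the choice $q = 2/\mu + \mu$ killing the coefficient $\mu(q-\mu)-2$ of the mixed terms, followed by an SVD change of basis that decouples the modes, and finally Gr\"onwall plus the lower bound $\mathcal{E}_t \ge \sigma_{\min}^2\left(\|\mathbf{X}\|^2 + \|\mathbf{Y}\|^2\right)$. The only cosmetic difference is in how $\dot{\mathcal{E}}_t \le -\rho\mathcal{E}_t$ is certified: the paper subtracts an explicit collection of non-positive squares from $-\rho\mathcal{E}_t$ (a sum-of-squares certificate), whereas you propose verifying negative semidefiniteness of the per-mode quadratic form $\rho M_{\mathcal{E},i} + M_{\dot{\mathcal{E}},i}$ by Schur complements, which amounts to the same computation.
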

(Proof in Appendix~\ref{app:general_bi_ly}).

\subsection{Discrete-Time Analysis}
In this section, we next analyze the convergence behavior of LEAD, Eq.(\ref{eq:eom_discrete}) in the case of the quadratic min-max game of Eq.(\ref{eq:bilinear}), using spectral analysis,
\begin{equation}
    \begin{split}
        {\bm{x}}_{k+1} &= {\bm{x}}_{k} + \beta \Delta{\bm{x}}_k - \eta h\bm{x}_k - \eta\mathbb{A}{\bm{y}}_{k} - \alpha \mathbb{A}\Delta{\bm{y}}_{k} \\
        {\bm{y}}_{k+1} &= {\bm{y}}_{k} + \beta \Delta{\bm{y}}_k - \eta h\bm{y}_k + \eta\mathbb{A}^T{\bm{x}}_{k} + \alpha\mathbb{A}^T\Delta{\bm{x}}_{k}, \\
    \end{split}
\end{equation}
where $\Delta{\bm{x}}_{k} = {\bm{x}}_{k} - {\bm{x}}_{k-1}$.

For brevity, consider the joint parameters $\bm{\omega}_t:= ({\bm{x}}_t, {\bm{y}}_t)$. We start by studying the update operator of simultaneous gradient descent-ascent, 
\begin{equation*}
    F_{\eta} (\bm{\omega}_t) = \bm{\omega}_t - \eta {\bm{v}}(\bm{\omega}_{t-1}).
\end{equation*}
where, the vector-field is given by Eq.~\ref{eq:vector-field}. Thus, the fixed point $\bm{\omega}^*$ of $F_{\eta} (\bm{\omega}_t)$ satisfies $F_{\eta} (\bm{\omega}^*) = \bm{\omega}^*$. 
\\
\\
Furthermore, at $\bm{\omega}^*$, we have,
\begin{equation}\label{eq:gda_operator}
    \nabla F_\eta (\bm{\omega}^*)  = \mathbb{I}_n - \eta \nabla {\bm{v}}(\bm{\omega}^*),
\end{equation}
with $\mathbb{I}_n$ being the $n\times n$ identity matrix. Consequently the spectrum of $\nabla F_\eta (\bm{\omega}^*)$ in the quadratic game considered, is,
\begin{equation}\label{eq:gda_evalues}
\text{Sp}(\nabla F_\eta (\bm{\omega}^{*}) )
  = \left\{ 1 - \eta h - \eta \lambda\ |\ \lambda \in \text{Sp}(\text{off-diag}[\nabla \bm{v}(\bm{\omega}^{*})]) \right\},
\end{equation}
\\
The next proposition outlines the condition under which the fixed point operator is guaranteed to converge around the fixed point.
\begin{prop} [Prop.~4.4.1 \citep{bertsekas1999nonlinear}]\label{prop:convergence}
For the spectral radius,
\begin{equation}
    \rho_\text{max} := \rho\{\nabla F_\eta (\bm{\omega}^*)\} < 1    
\end{equation}
and for some $\bm{\omega}_0$ in a neighborhood of $\bm{\omega}^*$, the update operator $F$, ensures linear convergence to $\bm{\omega}^*$ at a rate,
\begin{equation*}
\Delta_{t+1} \le \mathcal{O} (\rho + \epsilon) \Delta_{t}\ \forall\ \epsilon > 0,
\end{equation*}
where $\Delta_{t+1} := ||\bm{\omega}_{t+1} - \bm{\omega}^{*}||^2_2 + ||\bm{\omega}_{t} - \bm{\omega}^{*}||^2_2$.
\end{prop}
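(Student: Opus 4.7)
The plan is to prove this via the classical Ostrowski-style argument for locally linearly convergent smooth fixed-point iterations. Since $f$ is twice differentiable, $\mathbf{v}$ is $C^1$, and hence so is $F_\eta(\bm{\omega}) = \bm{\omega} - \eta\mathbf{v}(\bm{\omega})$. First I would write a first-order Taylor expansion around the fixed point and use $F_\eta(\bm{\omega}^*) = \bm{\omega}^*$ to obtain
\[
\bm{\omega}_{t+1} - \bm{\omega}^* \;=\; \nabla F_\eta(\bm{\omega}^*)(\bm{\omega}_t - \bm{\omega}^*) \;+\; r(\bm{\omega}_t),
\]
where $\|r(\bm{\omega}_t)\|_2 = o(\|\bm{\omega}_t - \bm{\omega}^*\|_2)$ as $\bm{\omega}_t \to \bm{\omega}^*$. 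The problem then reduces to showing that this linear-plus-small-perturbation recursion contracts near $\bm{\omega}^*$ at rate $\rho_\text{max}+\epsilon$.

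Next I would invoke the standard ``almost-spectral norm'' lemma (a consequence of the Jordan normal form; see e.g.\ Horn and Johnson, \emph{Matrix Analysis}): for every matrix $M$ and every $\delta>0$ there exists a vector norm $\|\cdot\|_\delta$ on $\mathbb{R}^{2n}$ whose induced operator norm satisfies $\|M\|_\delta \le \rho(M) + \delta$. Applying this with $M = \nabla F_\eta(\bm{\omega}^*)$ and $\delta = \epsilon/2$, and choosing a neighborhood $\mathcal{N}$ of $\bm{\omega}^*$ small enough that $\|r(\bm{\omega})\|_\delta \le (\epsilon/2)\|\bm{\omega} - \bm{\omega}^*\|_\delta$ on $\mathcal{N}$ (possible because $r = o(\cdot)$ and all norms on $\mathbb{R}^{2n}$ are equivalent), the triangle inequality applied to the Taylor identity gives
\[
\|\bm{\omega}_{t+1} - \bm{\omega}^*\|_\delta \;\le\; (\rho_\text{max}+\epsilon)\,\|\bm{\omega}_t - \bm{\omega}^*\|_\delta,
\]
which is a strict contraction once $\epsilon$ is small enough that $\rho_\text{max}+\epsilon<1$. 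In particular $\mathcal{N}$ is forward-invariant and the iterates converge linearly in $\|\cdot\|_\delta$.

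Finally I would transfer back to $\|\cdot\|_2$ via norm equivalence on $\mathbb{R}^{2n}$; the resulting multiplicative constant is absorbed into $\mathcal{O}(\cdot)$. Squaring and adding the analogous bound at step $t$ yields $\Delta_{t+1} \le \mathcal{O}\bigl((\rho_\text{max}+\epsilon)^2\bigr)\Delta_t$, and since $(\rho_\text{max}+\epsilon)^2 \le \rho_\text{max}+\epsilon$ when $\rho_\text{max}+\epsilon<1$, the claimed bound follows. The main obstacle—and what lifts the proposition above a one-line Banach fixed-point argument—is that $\nabla F_\eta(\bm{\omega}^*)$ from Eq.\eqref{eq:gda_operator} is generically non-normal, so its Euclidean operator norm is strictly larger than its spectral radius and one cannot simply contract in $\|\cdot\|_2$. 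The almost-spectral norm lemma is the tool that bridges this gap; the only other technical subtlety, forward-invariance of $\mathcal{N}$, is immediate once the remainder $r$ is controlled on it.
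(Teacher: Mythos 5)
Your proof is correct. Note that the paper does not actually supply a proof of this proposition: it is imported verbatim as Prop.~4.4.1 of Bertsekas (1999), so there is no in-paper argument to compare against. What you have written is the classical Ostrowski-theorem proof underlying that cited result, and it is sound: the first-order expansion about the fixed point, the almost-spectral-norm lemma $\|M\|_\delta \le \rho(M)+\delta$ to handle the (generically non-normal) Jacobian $\nabla F_\eta(\bm{\omega}^*)$ from Eq.~\eqref{eq:gda_operator}, forward-invariance of a $\|\cdot\|_\delta$-ball once the remainder is controlled, and transfer back to $\|\cdot\|_2$ with the norm-equivalence constant absorbed into the $\mathcal{O}(\cdot)$ — which is exactly why the proposition's bound must be stated with a $\mathcal{O}(\cdot)$ rather than as a clean per-step contraction in the Euclidean norm. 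Your observation that one cannot contract directly in $\|\cdot\|_2$ correctly identifies the only nontrivial point, and the final step $(\rho_{\text{max}}+\epsilon)^2 \le \rho_{\text{max}}+\epsilon$ for $\rho_{\text{max}}+\epsilon < 1$ legitimately converts the squared rate into the stated one for the quantity $\Delta_{t+1}$.
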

% Note that since the eigenvalues of $\nabla \bm{v}(\bm{\omega}^*)$ are purely imaginary in a bilinear game, the update operator in Eq.(\ref{eq:gda_operator}) is not convergent.

Next, we proceed to define the update operator of Eq.(\ref{eq:eom_discrete}) as $F_{\text{LEAD}}\left(\bm{\omega}_t, \bm{\omega}_{t-1}\right) = \left(\bm{\omega}_{t+1}, \bm{\omega}_{t}\right)$ . For the quadratic min-max game of Eq.(\ref{eq:bilinear}), the Jacobian of $F_{\text{LEAD}}$ takes the form,
% \small
\begin{equation}\label{eq:jacobian_f_slead_2}
\begin{split}
\nabla F_\text{LEAD} &= 
\begin{bmatrix}
    \mathbb{I}_{2n} + \beta \mathbb{I}_{2n} - \left(\eta + \alpha\right) \nabla {\bm{v}} & \ - \beta \mathbb{I}_{2n} + \alpha \nabla {\bm{v}}\\
    \mathbb{I}_{2n} & 0\\
\end{bmatrix}.
\end{split}
\end{equation}
% \normalsize
\\
In the next Theorem \ref{th:spectral_evalues}, we find the set of eigenvalues corresponding to the update operator $\nabla F_\text{LEAD}$ which are then used in Theorem \ref{th:lead_rate}, where we show for a selected values of $\eta$ and $\alpha$, LEAD attains a linear rate.

\begin{theorem}\label{th:spectral_evalues}
The eigenvalues of\ \ $\nabla F_{\text{LEAD}} (\bm{\omega}^*)$ are,
\begin{equation}
        \mu_{\pm} = \frac{1 - (\eta + \alpha)\lambda + \beta - \eta h \pm \sqrt{\Delta}}{2}\\
\end{equation}
where, 
\begin{equation*}
\Delta = \left(1 - \left(\eta + \alpha\right)\lambda + \beta - \eta h\right)^2 - 4\left(\beta - \alpha \lambda \right)
\end{equation*}
\\
and $\lambda \in \text{Sp}(\text{off-diag}[\nabla \bm{v} (\bm{\omega}^*)])$. 
\\
\\
Furthermore, for $h, \eta, |\alpha|,|\beta| << 1$, we have,
\begin{equation}\label{eq:mup}
        \begin{split}
        % \mu_{+} (\alpha, \eta) \approx 1 - \eta \lambda + \alpha \lambda \left(\frac{\lambda \eta}{1 - \lambda \eta}\right) + \mathcal{O} \left(\alpha^2\right) 
% \mu_{+} (\alpha, \beta, \eta) \approx 1 - \eta \lambda_i - \frac{\eta\lambda_i\left(\beta - \alpha\lambda_i\right)}{1 - \eta\lambda_i} + \mathcal{O} (\alpha^2) + \mathcal{O} (\beta^2)
        \mu_{+} \approx &1 - \eta h + \frac{\left(\eta + \alpha\right)^2\lambda^2 + \eta^2 h^2 + \beta^2 - 2\eta h \beta}{4} \\ &+ \lambda\left(\frac{\eta + \alpha}{2}\left(\eta h - \beta\right) - \eta\right) \\
        \end{split}
\end{equation}       
and 
\begin{equation}\label{eq:mun}
    % \mu_{-} (\alpha, \eta) \approx - \alpha \lambda \left(\frac{1}{1 - \lambda \eta}\right) + \mathcal{O} \left(\alpha^2\right). 
    % \mu_{-} (\alpha, \beta, \eta) \approx \frac{\beta - \alpha \lambda_i}{1 - \eta \lambda_i} + \mathcal{O} (\alpha^2) + \mathcal{O} (\beta^2)
    \begin{split}
        \mu_{-} \approx &\beta - \frac{\left(\eta + \alpha\right)^2\lambda^2 + \eta^2 h^2 + \beta^2 - 2\eta h \beta}{4} \\ &+ \lambda\left(\frac{\eta + \alpha}{2}\left(\beta - \eta h\right) - \alpha\right) \\
    \end{split}
\end{equation}
\end{theorem}
See proof in Appendix \ref{app:spectral_evalues}.

\textcolor{black}{Theorem~\ref{th:spectral_evalues} states that the LEAD operator has two eigenvalues $\mu_{+}$ and $\mu_{-}$ for each $\lambda \in \text{Sp}\left(\text{off-diag}[\nabla {\bm{v}} (\bm{\omega}^*)]\right)$. Specifically, $\mu_{+}$ can be viewed as a shift of the eigenvalues of GDA in Eq.(\ref{eq:gda_evalues}), while additionally being the leading eigenvalue for small values of $h, \eta, |\alpha|$ and $|\beta|$. ({\color{black}See Fig. \ref{fig:evalues} for a schematic description})
Also, for small values of $\alpha$, $\mu_{+}$ is the limiting eigenvalue while $\mu_{-}\approx0$.}

\begin{figure}[h!]
\centering
      \includegraphics[width=0.3\linewidth]{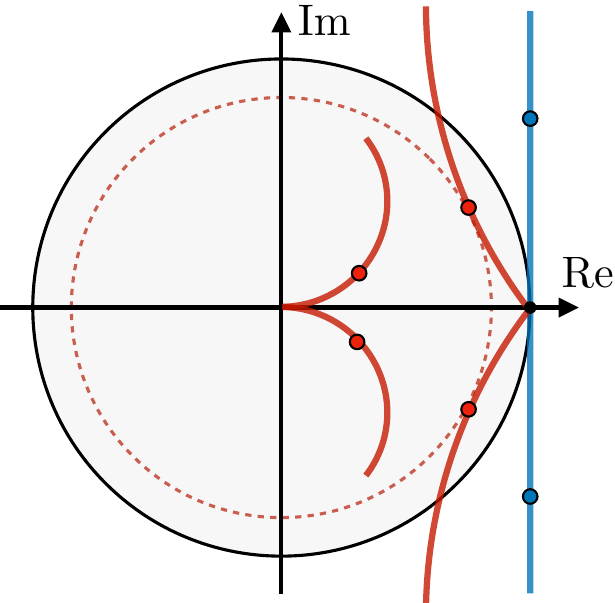}
        \caption{ Diagram depicts  positioning of the eigenvalues of GDA in \textcolor{myblue}{blue} (Eq.~\ref{eq:gda_operator}) and those of LEAD (\cref{eq:mup,eq:mun} with $\beta = h = 0$) in \textcolor{black}{red}. Eigenvalues inside the black unit circle imply convergence such that the closer to the origin, the faster the convergence rate (Prop. \ref{prop:convergence}). Every point on solid blue and red lines corresponds to a specific choice of learning rate. No choice of learning rate results in convergence for gradient ascent descent method as the blue line is tangent to the unit circle. At the same time, for a fixed value of $\alpha$, LEAD shifts the eigenvalues ($\mu_{+}$) into the unit circle which leads to a convergence rate proportional to the radius of the red dashed circle. Note that LEAD also introduces an extra set of eigenvalues ($\mu_{-}$) which are close to zero and do not affect convergence .
  }
\label{fig:evalues}
\end{figure}

In the following Proposition, we next show that locally, a choice of positive $\alpha$ decreases the spectral radius of $\nabla F_\eta\left(\bm{\omega}^{*}\right)$ defined as,
\begin{equation*}
\rho := \max \{|\mu_{+}|^2, |\mu_{-}|^2 \}\ \forall\ \lambda.
\end{equation*}
\\
\begin{prop}\label{prop:norm_eigen}

For any $\lambda \in \text{Sp}(\text{off-diag}[\nabla {\bm{v}}(\omega^*)])$,
\begin{equation}
    \nabla_{\alpha} \rho \left(\lambda\right) \big|_{\alpha = 0} < 0 \Leftrightarrow \eta \in \left( 0, \frac{2}{\operatorname{Im}(\lambda_\text{max})} \right),
\end{equation}
where $\operatorname{Im}(\lambda_\text{max})$ is the imaginary component of the largest eigenvalue $\lambda_\text{max}$.
\end{prop}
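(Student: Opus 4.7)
The plan is to reduce the computation of $\nabla_\alpha \rho(0)$ to that of $\nabla_\alpha |\mu_+|^2$ at $\alpha=0$, and then to read off the sign as a function of $\eta$ and $\operatorname{Im}(\lambda)$.

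First, I would note that at $\alpha=0$ the expansions from Theorem~\ref{th:spectral_evalues} yield $\mu_-(0)=0$ and $\mu_+(0)=1-\eta\lambda$. Since $\lambda \in \text{Sp}(\nabla \mathbf{v}(\bm{\omega}^*))$ is purely imaginary in the bilinear game, $|\mu_+(0)|^2 = 1+\eta^2\operatorname{Im}(\lambda)^2 \ge 1 > 0 = |\mu_-(0)|^2$. By continuity, $|\mu_+|^2$ strictly dominates $|\mu_-|^2$ on a neighborhood of $\alpha=0$, so the max-function $\rho$ is smooth there and $\nabla_\alpha \rho(0) = \nabla_\alpha |\mu_+|^2|_{\alpha=0}$. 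This reduces the problem to analyzing a single eigenvalue.

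Next, I would write $\lambda = i\omega$ with $\omega = \operatorname{Im}(\lambda)>0$ (the case $\omega<0$ is covered by the complex-conjugate eigenvalue), substitute into the expansion $\mu_+(\alpha,\eta) \approx 1 - \eta\lambda + \alpha\lambda^2\eta/(1-\lambda\eta)$, and rationalize using $1/(1-i\omega\eta) = (1+i\omega\eta)/(1+\omega^2\eta^2)$. Applying the identity $\nabla_\alpha |\mu_+|^2 = 2\operatorname{Re}\bigl(\bar\mu_+\,\nabla_\alpha \mu_+\bigr)$, and using $\operatorname{Re}[(1+i\omega\eta)^2] = 1-\omega^2\eta^2$, I expect to arrive at the closed form
\[
\nabla_\alpha |\mu_+|^2|_{\alpha=0} \;=\; -\,\frac{2\omega^2\eta\,(1-\omega^2\eta^2)}{1+\omega^2\eta^2}.
\]

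Reading off the sign of this expression then gives the equivalence at once: for $\eta>0$ the factor $\omega^2\eta/(1+\omega^2\eta^2)$ is strictly positive, so the sign of $\nabla_\alpha \rho(0)$ is the sign of $-(1-\omega^2\eta^2)$, which is negative iff $\omega\eta<1$, i.e.\ $\eta<1/\operatorname{Im}(\lambda)$. I do not anticipate a substantive obstacle: the only step requiring care is the smoothness of the max at $\alpha=0$, which is handled by the strict dominance in the first step, after which the argument is a short chain-rule computation whose only nontrivial simplification is the rationalization of $1/(1-i\omega\eta)$.
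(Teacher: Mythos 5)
Your proposal is correct and follows essentially the same route as the paper: identify $\rho$ with $|\mu_+|^2$ near $\alpha=0$, apply $\nabla_\alpha|\mu_+|^2 = 2\operatorname{Re}\bigl(\overline{\mu}_+\,\nabla_\alpha\mu_+\bigr)$ to the first-order expansion of $\mu_+$, and specialize to purely imaginary $\lambda$; your closed form $-2\omega^2\eta(1-\omega^2\eta^2)/(1+\omega^2\eta^2)$ agrees exactly with the paper's $\tfrac{2\eta}{|1-\eta\lambda|^2}(\eta^2\xi^4-\xi^2)$. The only difference is cosmetic (you rationalize after substituting $\lambda=i\omega$ rather than before), and your explicit justification that $|\mu_+|^2$ strictly dominates $|\mu_-|^2$ near $\alpha=0$ — so the max is differentiable there — is a small point the paper asserts without argument.
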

See proof in Appendix \ref{app:norm_eigen}.

Having established that a small positive value of $\alpha$ improves the rate of convergence, in the next theorem, we prove that for a specific choice of positive $\alpha$ and $\eta$ in the quadratic game Eq.(\ref{eq:bilinear}), a linear rate of convergence to its Nash equilibrium is attained. 
\begin{theorem}\label{th:lead_rate}
Setting $\eta = \alpha = \textcolor{black}{\frac{{1}}{2 (\sigma_\text{max}(\mathbb{A}) + h)}}$, then we have $\forall\ \epsilon > 0$,
\begin{equation}\label{eq:LEAD-rate}
\textcolor{black}{
    \Delta_{t+1} \in \mathcal{O}\left(\left(1 - \frac{\sigma_{min}^2}{4 (\sigma_{max} + h)^2} - \frac{(1 + \beta/2)h}{\sigma_{max} + h} + \frac{3h^2}{8(\sigma_{max} + h)^2} \right)^t \Delta_0\right)
    }
\end{equation}
where $\sigma_{max} (\sigma_{min})$ is the largest (smallest) eigen value of $\mathbb{A}$ and 
\begin{equation*}
\Delta_{t+1} := ||\bm{\omega}_{t+1} - \bm{\omega}^{*}||^2_2 + ||\bm{\omega}_{t} - \bm{\omega}^{*}||^2_2.
\end{equation*}
\end{theorem}
Theorem \ref{th:lead_rate} ensures a linear convergence of LEAD in the quadratic min-max game. (Proof in Appendix~\ref{app:lead_rate}). 
\section{Comparison of Convergence Rate for Quadratic Min-Max Game}\label{sec:rate-comparison}
\textcolor{black}{
In this section, we perform a Big-O comparison of convergence rates of LEAD (Eq.~\ref{eq:LEAD-rate}), with several other existing methods. Below in Table \ref{tab:compare_rates} we summarize the convergence rates for the quadratic min-max game of Eq.~\ref{eq:bilinear}. For each method that converges at the rate of $\mathcal{O}((1 - r)^t)$, we report the quantity $r$ (larger $r$ corresponds to faster convergence). We observe that for the quadratic min-max game, given the analysis in~\cite{azizian2020tight},  for $h < \sigma_\text{max}\left(\mathbb{A}\right)$ and $\beta > \frac{3h^2}{8 (\sigma_{max} + h)}$,  $r_\text{LEAD} \gtrsim r_\text{EG}$ and $r_\text{LEAD} \gtrsim r_\text{OG}$. Furthermore, for the bilinear case, where $h=0$, LEAD has a faster convergence rate than EG and OG.}
 
\begin{table}[h!]
\centering
\begin{tabular}{cc}
\specialrule{.1em}{.05em}{.05em} 
  Method     & r   \\ \hline
 Alternating-GDA    & $h / 2L$          \\[4pt]
 Extra-Gradient (EG)    & $\frac{1}{4}(h / L +\sigma_{\min}^2\left(\mathbb{A}\right) / 16 L^2)$          \\[4pt]
 Optimistic Gradient(OG)     &  $\frac{1}{4}(h / L + \sigma_{\min}^2\left(\mathbb{A}\right) / 32 L^2)$          \\[4pt]
 Consensus Optimization (CO)    &  $ h^2 / 2 L^2_H + \sigma_{\min}^2\left(\mathbb{A}\right) / 2 L^2_H$          \\[4pt]
LEAD (Th. \ref{th:lead_rate})      &  $(1 + \beta/2) h/ (\sigma_{max} + h) + \sigma_{min}^2 / 4 (\sigma_{max} + h)^2 - 3h^2 / 8(\sigma_{max} + h)^2$ \\

\specialrule{.1em}{.05em}{.05em}
\end{tabular}
\caption{\textcolor{black}{Big-O comparison of convergence rates of LEAD against EG~\citep{korpelevich1976extragradient}, OG~\citep{mertikopoulos2018optimistic} and CO~\citep{mescheder2017numerics} for the \textbf{quadratic min-max game} of Eq.~\ref{eq:bilinear}. We report the EG, OG and CO rates from the tight analysis in~\cite{azizian2020tight} and Alt-GDA from~\cite{zhang2022near}. For each method that converges at the rate of $\mathcal{O}((1 - r)^t)$, we report the quantity $r$ (larger $r$ corresponds to faster convergence). Note that $L := \sqrt{2} \max\{h , \sigma_\text{max}(\mathbb{A})\}$, is the Lipschitz constant of the vector field and  and $L_H^2$
is the Lipschitz-smoothness of $\frac{1}{2} \|v\|^2$. }}
\end{table}\label{tab:compare_rates}

\section{Comparison of Computational Cost}\label{sec:compute_comparison}
In this section we first study several second-order algorithms and perform computational comparisons on an 8-Gaussians generation task.
The Jacobian of the gradient vector field ${\bm{v}} = (\nabla_x  f({\bm{x}}, {\bm{y}}), -\nabla_{y} f({\bm{x}}, {\bm{y}})) $ is given by, 
\begin{equation}\label{eq:jacobian}
\mathbb{J} = 
\begin{bmatrix}
    \nabla^2_{x}  f\left({\bm{x}}, {\bm{y}}\right) & \nabla_{xy} f\left({\bm{x}}, {\bm{y}}\right)\\
    -\nabla_{yx}  f\left({\bm{x}}, {\bm{y}}\right) & -\nabla^2_{y}  f\left({\bm{x}}, {\bm{y}}\right) \\
\end{bmatrix}.
\end{equation}
\\
Considering player $\bm{x}$, a LEAD update requires the computation of the term $\nabla_{xy}f\left({\bm{x}}_k, {\bm{y}}_k\right) \left({\bm{y}}_k - {\bm{y}}_{k-1}\right)$, thereby involving only one block of the full Jacobian $\mathbb{J}$. On the other hand Symplectic Gradient Adjustment (SGA) \citep{Balduzzi2018symplectic}, requires the full computation of two Jacobian-vector products $\mathbb{J} {\bm{v}}, \mathbb{J}^{\top} \bm{v}$. Similarly, Competitive Gradient Descent (CGD)~\citep{schafer2019competitive} involves the computation of the following term,
\begin{equation*}
 \left(1 + \eta\nabla_{xy}^2f({\bm{x}}_k, {\bm{y}}_k)\nabla_{yx}^2f({\bm{x}}_k, {\bm{y}}_k)\right)^{-1}   
\end{equation*}
along with the Jacobian-vector product,
\begin{equation*}
\nabla_{xy}^2 f({\bm{x}}_k,{\bm{y}}_k) \nabla_y f({\bm{x}}_k,{\bm{y}}_k).    
\end{equation*}

While the inverse term is approximated using conjugate gradient method, it still involves the computation of approximately ten Jacobian-vector products for each update. To explore these comparisons in greater detail and on models with many parameters, we experimentally compare the computational cost of our method with several other second as well as first-order methods on the 8-Gaussians problem in Figure \ref{fig:8_G_time} (architecture reported in Appendix \ref{app:experiment}). We calculate the average wall-clock time (in milliseconds) per iteration. Results are reported on an average of 1000 iterations, computed on the same architecture and the same machine with forced synchronous execution. All the methods are implemented in PyTorch \cite{paszke2017automatic} and SGA is replicated based on the official implementation~\footnote{SGA official DeepMind implementation (non-zero sum setting): \scriptsize\url{https://github.com/deepmind/symplectic-gradient-adjustment/blob/master/Symplectic_Gradient_Adjustment.ipynb}}.

\begin{figure}[h!]
\setcitestyle{numbers}
\centering
      \includegraphics[width=0.5\linewidth]{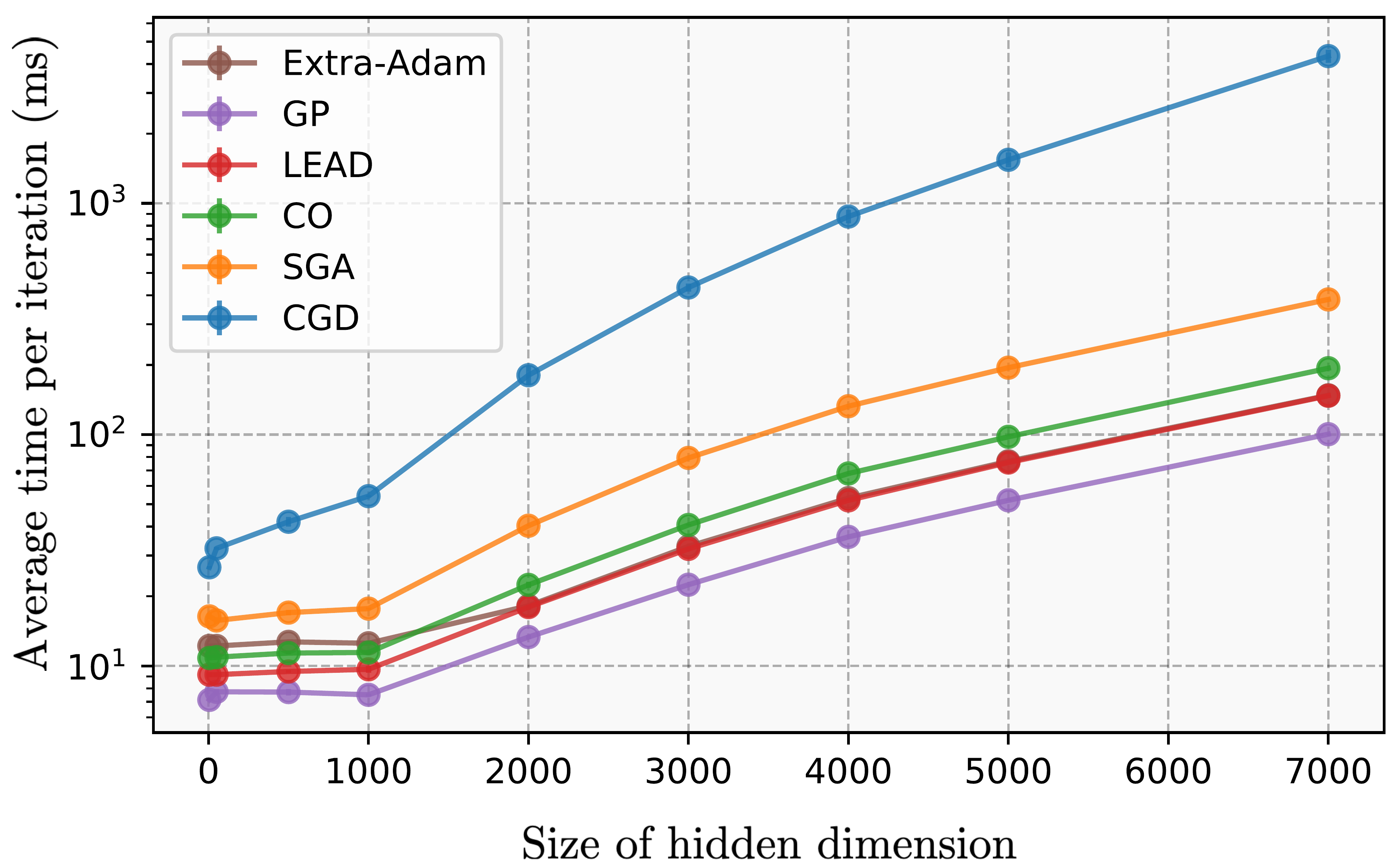}
        \small
  \caption[Caption for LOF]{Average computational cost per iteration of several well-known methods for (non-saturating) GAN optimization. The numbers are reported on the 8-Gaussians generation task and averaged over 1000 iterations. Note that the y-axis is log-scale. We compare Competitive Gradient Descent (CGD)~\citep{schafer2019competitive} (using official CGD optimizer code), Symplectic Gradient Adjustment (SGA)~\citep{balduzzi2018mechanics}, Consensus Optimization (CO) \citep{mescheder2017numerics}, Extra-gradient with Adam (Extra-Adam)~\citep{gidel2018variational}, WGAN with Gradient Penalty (WGAN GP)~\citep{gulrajani2017improved}. We observe that per-iteration time complexity of our method is very similar to Extra-Adam and WGAN GP and is much cheaper than other second order methods such as CGD. Furthermore, by increasing the size of the hidden dimension of the generator and discriminator's networks we observe that the gap between different methods increases. 
}
\label{fig:8_G_time}
\end{figure}

Furthermore, we observe that the computational cost per iteration of LEAD while being much lower than SGA and CGD, is similar to WGAN-GP and Extra-Gradient. The similarity to Extra-Gradient is due to the fact that for each player, Extra-Gradient requires the computation of a half-step and a full-step, so in total each step requires the computation of two gradients. LEAD also requires the computation of a gradient ($\nabla f_{x}$)  which is then used to compute ($\nabla f_{x y}$) multiplied by $({\bm{y}}_k - {\bm{y}}_{k-1})$. Using PyTorch, we do not require to compute $\nabla f_{x y}$ and then perform the multiplication. Given $\nabla f_{x}$ the whole term $\nabla f_{x y} ({\bm{y}}_k - {\bm{y}}_{k-1})$, is computed using PyTorch's Autograd Vector-Jacobian product, with the computational cost of a single gradient. Thus, LEAD also requires the computation of two gradients for each step.

%%%%%%%%%% EXPERIMENTS %%%%%%%%%%
\section{Experiments}\label{sec:experiments}
In this section, we first empirically validate the performance of LEAD on several toy as well as large-scale experiments. Furthermore, we extend LEAD based on the Adam algorithm to be used in large-scale experiments. See \ref{alg:lead_adam} for the detailed Algorithm.

\subsection{Adversarial vs Cooperative Games}\label{sec:adv_coop}
In Section \ref{sec:compute_comparison} we showed that using Auto-grad software tools such as TensorFlow and PyTorch, LEAD can be computed very efficiently and as fast as extra-gradient. In this section we compare the perforamce of LEAD with several first order methods in a toy setup inspired by \citep{lorraine2022complex}. Consider the following game,
\vspace{0.1cm}
\begin{equation}\label{eq:bilinear_adversarial_interpolate}
\smash{
\min_{\boldsymbol{x}} \max_{\boldsymbol{y}}  \boldsymbol{x}^T (\boldsymbol{\gamma} \boldsymbol{A}) \boldsymbol{y} + \boldsymbol{x}^T ((\textbf{I} - \boldsymbol{\gamma})\boldsymbol{B}_1) \boldsymbol{x} - \boldsymbol{y}^T ((\textbf{I} - \boldsymbol{\gamma})\boldsymbol{B}_2) \boldsymbol{y}%, 
}.
\end{equation}

Such formulation in Eq.~\ref{eq:bilinear_adversarial_interpolate} enables us to compare the performance of different methods in cooperative games, adversarial games, and any interpolation between the two. Namely, varying $\gamma$ from $\textbf{0}$ to $\textbf{I}$ changes the dynamics of the game from purely cooperative to adversarial. 
Many real-world applications such as GANs exhibit an analogous range of adversarial to cooperative changes during training \citep{lorraine2022complex}. 
 
%  This game clearly showcases the performance of different methods in different types of games, 

In Figure \ref{fig:adv_coop}, we compare LEAD against several methods including gradient descent-ascent (GDA), extragradient (EG)~\citep{korpelevich1976extragradient}, optimistic gradient (OG)~\citep{mertikopoulos2018optimistic}, complex momentum (CM)~\citep{lorraine2022complex}, negative momentum (NM)~\citep{gidel2019negative}, and positive momentum (PM)~\citep{polyak1964some}. Each method is tuned to optimality for each setup. 

\begin{figure}[h!]
\centering
      \includegraphics[width=0.5\linewidth]{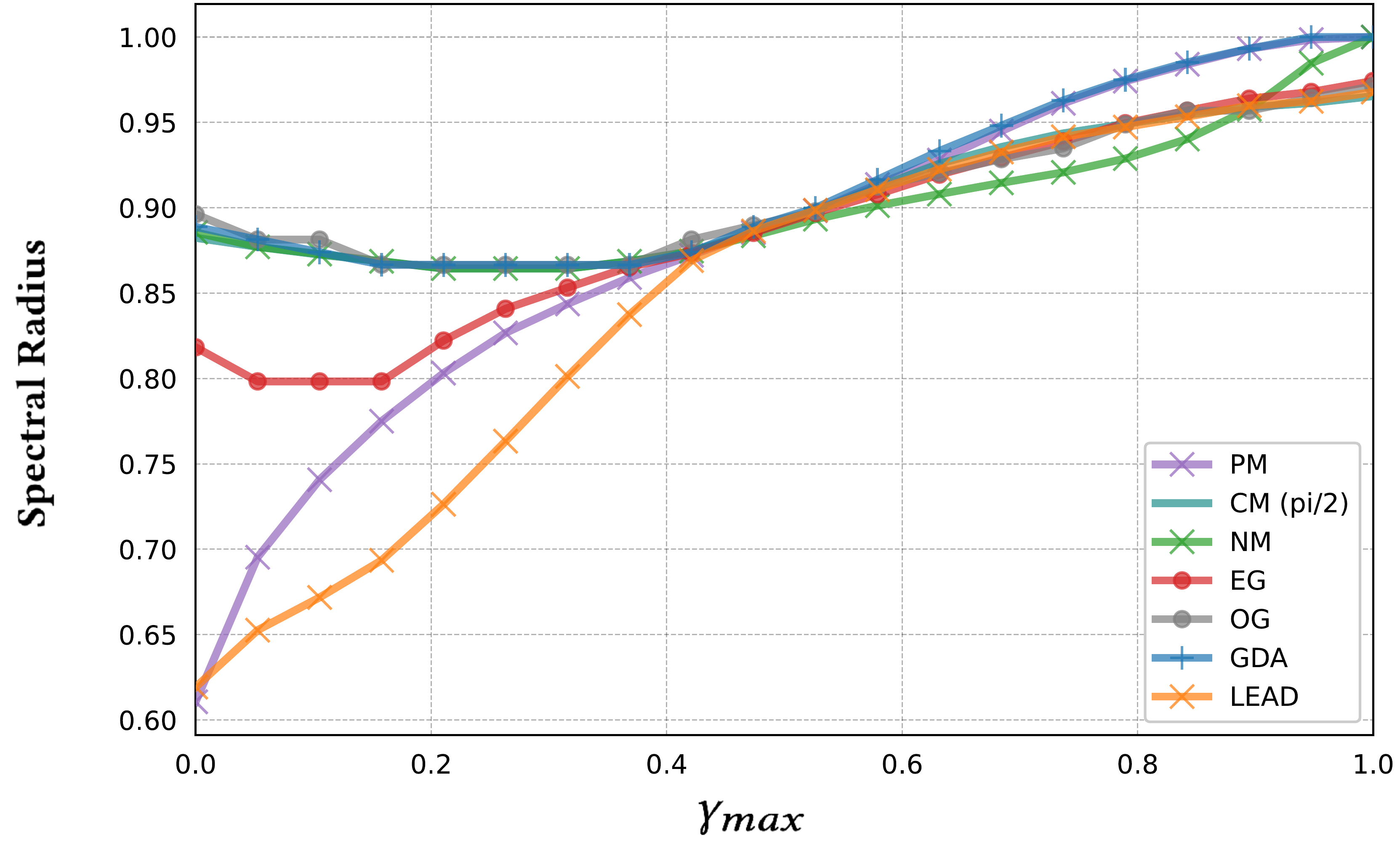}
  \caption{Comparison of several methods on the game in Eq.~\ref{eq:bilinear_adversarial_interpolate}. The diagonal matrix $\gamma$ determines the degree of adversarialness along each direction. Elements on the diagonal are sampled from a uniform distribution,$\gamma_{ii} \sim \text{Unif}[0, \gamma_{max}]$. By varying $\gamma_{max}$ from $0$ to $1$, we move from a purely cooperative setup to a hybrid setup with a mixture of cooperative and adversarial games. The spectral radius (shown on the y-axis) determines the convergence rate in this game and is a function of $\gamma_{max}$. The smaller the spectral radius, the faster the convergence rate. A spectral radius of 1 corresponds to non-convergent dynamics.
We compare several methods including gradient descent-ascent (GDA), extragradient (EG)~\citep{korpelevich1976extragradient}, optimistic gradient (OG)~\citep{mertikopoulos2018optimistic}, complex momentum (CM)~\citep{lorraine2022complex}, negative momentum (NM)~\citep{gidel2019negative}, and positive momentum (PM)~\citep{polyak1964some}. Each method has been tuned to optimality for each setup. For cooperative games (on the leftmost), LEAD and Positive Momentum achieve great performance. In more adversarial settings (on the rightmost), LEAD performs on par with other game-specific optimization methods (excluding Negative Momentum, GDA and Positive Momentum which diverge). This plot suggests that LEAD is a robust optimizer across different types of games. We conjecture that for the same reason, LEAD performs desirably in real-world setups such as GANs where the adversarialness changes dynamically throughout the training \citep{lorraine2022complex}.}
\label{fig:adv_coop}
\end{figure}
\subsection{Generative Adversarial Networks}\label{sec:expirements_gans}
We study the performance of LEAD in zero-sum as well as non-zero sum settings. See \textcolor{black}{Appendix} \ref{app:gan_toy} for a comparison of LEAD-Adam against vanilla Adam on the generation task of a mixture of 8-Gaussians.

\textbf{CIFAR-10 DCGAN}: We evaluate LEAD-Adam on the task of CIFAR-10~\citep{krizhevsky2009learning} image generation with a non-zero-sum formulation (non-saturating) on a DCGAN architecture similar to \cite{gulrajani2017improved}. As shown in Table.~\ref{tab:comparison}, we compare with several first-order and second order methods and observe that LEAD-Adam outperforms the rest in terms of Fréchet Inception Distance (FID)~\citep{heusel2017gans} and Inception score (IS)~\citep{salimans2016improved} \footnote{The FID and IS are metrics for evaluating the quality of generated samples of a generative model. Lower FID and higher inception score (IS) correspond to better sample quality.}, reaching an FID score of 19.27$\pm$0.10 which outperforms OMD~\citep{mertikopoulos2018optimistic} and CGD~\citep{schafer2019competitive}. See Figure \ref{fig:dcgan_iter} that shows the improvement of FID using LEAD-Adam vs vanilla Adam and Figure \ref{fig:cifar} for a sample of the generated images. 

\textbf{CIFAR-10 ResNet}:
Furthermore, we evaluate LEAD-Adam on more complex and deep architectures. We adapt the ResNet architecture in SN-GAN~\cite{miyato2018spectral}. We compare with several existing results on the task of image generation on CIFAR-10 using ResNets. See Table \ref{tab:comparison} for a full comparison. Note that state of the art performance in recent work such as Style-GAN based models \citep{sauer2022stylegan, kang2021rebooting, lee2021vitgan} or BigGAN based models \citep{brock2018large, lorraine2022complex} use architectures that are 30 times or more larger than the architecture that we have chosen to test our method on. 

We report our results against a properly tuned version of SNGAN that achieves an FID of 12.36. Our method obtains a competitive FID of 10.49. We give a detailed description of these experiments and full detail on the architecture and hyper-parameters in Appendix~\ref{app:experiment}. See also Figure \ref{fig:resnet_sample} for a sample of generated samples on a ResNet using LEAD-Adam.

\begin{figure}[h!]
\centering
      \includegraphics[width=0.4\linewidth]{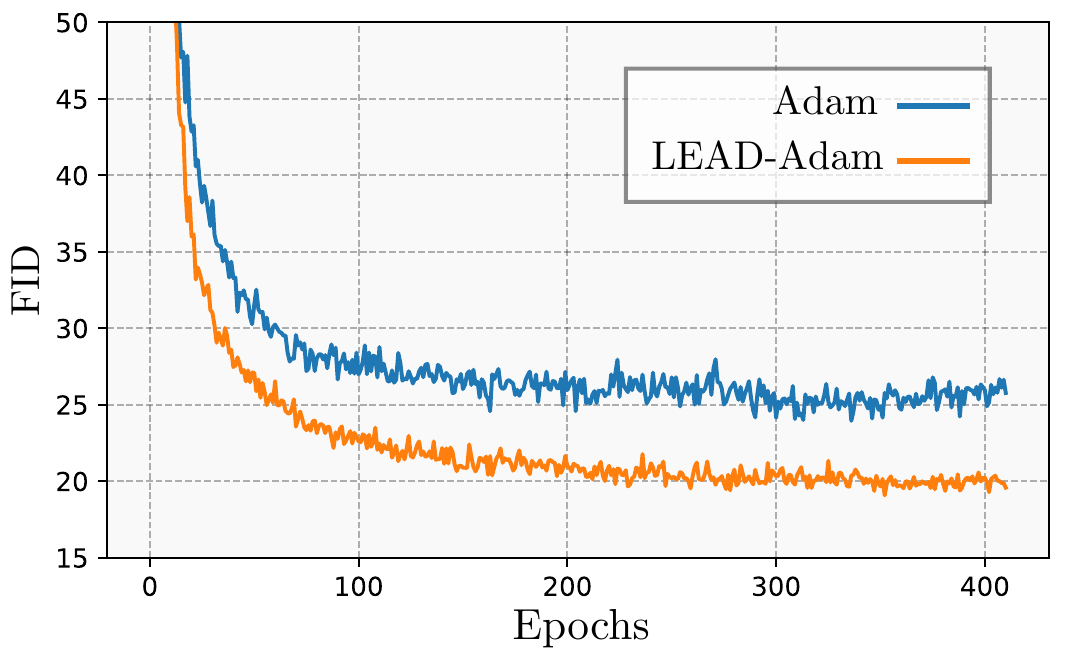}
  \small
  \caption{Plot showing the evolution of the FID over 400 epochs for our method (LEAD-Adam) vs vanilla Adam on a DCGAN architecture. \textcolor{black}{It is important to note that compared to the Adam, LEAD-Adam is twice expensive computationally.}
  }
  \label{fig:dcgan_iter}
\end{figure}

\begin{table}[t]

    \centering
    
    \caption{ 
      Performance of several methods on CIFAR-10 image generation task. The FID and IS is reported over 50k samples unless mentioned otherwise. 
      }
\resizebox{0.6\textwidth}{!}
    {
    \begin{tabular}{c c c}
    \specialrule{.1em}{.05em}{.05em} 
    {\bf DCGAN} &{\bf FID ($\downarrow$)} &{\bf IS ($\uparrow$)}  \\ 
    \specialrule{.1em}{.05em}{.05em} 
    Adam ~\citep{radford2015unsupervised} &  24.38 $\pm$ 0.13 & 6.58 \\ 
    \hline 
    LEAD-Adam & 19.27 $\pm$ 0.10 & 7.58$\pm$ 0.11 \\ 
    \hline 
    CGD-WGAN ~\citep{schafer2019competitive} & 21.3 & 7.2 \\
    \hline 
    OMD ~\citep{daskalakis2018training}& 29.6 $\pm$ 0.19 & 5.74 $\pm$ 0.1 \\
    \specialrule{.1em}{.05em}{.05em} 
    {\bf ResNet} & & \\
    \specialrule{.1em}{.05em}{.05em} 
    SNGAN & 12.10 $\pm$ 0.31 & 8.58  $\pm$ 0.03\\ 
    \hline 
    LEAD-Adam (ours) &\textbf{ 10.49} $\pm$ 0.11 & \textbf{8.82} $\pm$ 0.05 \\
    \hline 
    ExtraAdam ~\citep{gidel2018variational} & 16.78 $\pm$ 0.21 & 8.47 $\pm$ 0.1 \\
    \hline 
    LA-GAN ~\citep{chavdarova2020taming} &12.67 $\pm$ 0.57 &  8.55 $\pm$ 0.04  \\
    \hline 
    ODE-GAN~\citep{qin2020training} & 11.85 $\pm$ 0.21 & 8.61 $\pm$ 0.06\\
    \specialrule{.1em}{.05em}{.05em} 
    {\bf Evaluated with 5k samples} & \\
    \specialrule{.1em}{.05em}{.05em} 
    SN-GAN (DCGAN) \citep{miyato2018spectral}  & 29.3 & 7.42 $\pm$ 0.08  \\
    \hline 
    SN-GAN (ResNet) \citep{miyato2018spectral}  & 21.7 $\pm$ 0.21 &  8.22 $\pm$ 0.05\\ 
    \specialrule{.1em}{.05em}{.05em} 
    \end{tabular}
    }
    \label{tab:comparison}
\end{table}

\section{Related Work}\label{sec:related}

\textbf{Game Optimization}: With increasing interest in games, significant effort is being spent in understanding common issues affecting optimization in this domain. These issues range from convergence to non-Nash equilibrium points, to exhibiting rotational dynamics around the equilibrium which hampers convergence. Authors in ~\cite{mescheder2017numerics} discuss how the eigenvalues of the Jacobian govern the local convergence properties of GANs. They argue that the presence of eigenvalues with zero real-part and large imaginary part results in oscillatory behavior. To mitigate this issue, they propose Consensus Optimization (CO). Along similar lines,~\cite{balduzzi2018mechanics, gemp2018global,letcher2019differentiable, loizou2020stochastic} use the \textit{Hamiltonian} of the gradient vector-field, to improve the convergence in games through disentangling the convergent parts of the dynamics from the rotations. Another line of attack taken in ~\cite{schafer2019competitive} is to use second-order information as a regularizer of the dynamics and motivate the use of Competitive Gradient Descent (CGD). In~\cite{wang2019solving}, Follow the Ridge (FtR) is proposed. They motivate the use of a second order term for one of the players (follower) as to avoid the rotational dynamics in a sequential formulation of the zero-sum game. 
See appendix \ref{app:comparison} for full discussion on the comparison of LEAD versus other second-order methods.

Another approach taken by~\cite{gidel2019negative}, demonstrate how applying negative momentum over GDA can improve convergence in min-max games, while also proving a linear rate of convergence in the case of bilinear games. More recently, ~\cite{zhang2021suboptimality} have shown the suboptimality of negative momentum in specific settings. Furthermore, in \citep{lorraine2022complex} authors carry-out an extensive study on the effect of momentum in games and specifically show that complex momentum is optimal in many games ranging from adversarial to non-adversarial settings. 
~\cite{daskalakis2018training} show that extrapolating the next value of the gradient using previous history, aids convergence. In the same spirit, ~\cite{chavdarova2020taming}, proposes LookAhead GAN (LA-GAN) and show that the LookAhead algorithm is a compelling candidate in improving convergence in GANs.~\cite{gidel2018variational} also explores this line of thought by introducing averaging to develop a variant of the extra-gradient algorithm and proposes Extra-Adam-Averaging. Similar to Extra-Adam-Averaging is SN-EMA~\cite{yazici2019unusual} which uses the SN-GAN and achieves great performance by applying an exponential moving average on the parameters. More recently, ~\cite{fiez2021local} study using different time-scales for each player in zero-sum non-convex, non-concave games. Furthermore, in ~\cite{rosca2021discretization} authors study the dynamics of game optimization in both continuous and discrete time and examine the effects of discritization drift on the game performance. They suggest a modified continuous-time dynamical system that more closely matches the discrete time dynamics and introduce regularizers that mitigate the effect harmful drifts.

Lastly, in regard to convergence analysis in games, ~\cite{zhang2022near} study the convergence of alternating gradient descent-ascent for minmax games, ~\cite{golowich2020last} provide last iterate convergence rate for convex-concave saddle point problems.~\cite{nouiehed2019solving} propose a multi-step variant of gradient descent-ascent, to show it can find a game's $\epsilon$–first-order stationary point. Additionally, ~\cite{azizian2020tight} and~\cite{ibrahim2020linear} provide spectral lower bounds for the rate of convergence in the bilinear setting for an accelerated algorithm developed in~\cite{azizian2020accelerating} for a specific families of bilinear games. Furthermore, \cite{fiez2020gradient} use Lyapunov analysis to provide convergence guarantees for gradient descent ascent using timescale separation and in \cite{hsieh2020limits}, authors show that commonly used algorithms for min-max optimization converge to attractors that are not optimal. 

\textbf{Single-objective Optimization and Dynamical Systems}: The authors of~\cite{su2014differential} started a new trend in single-objective optimization by studying the continuous-time dynamics of Nesterov's accelerated method~\citep{nesterov2013introductory}. Their analysis allowed for a better understanding of the much-celebrated Nesterov's method. In a similar spirit,~\cite{wibisono2016variational,wilson2016lyapunov} study continuous-time accelerated methods within a Lagrangian framework, while analyzing their stability using Lyapunov analysis. These work show that a family of discrete-time methods can be derived from their corresponding continuous-time formalism using various discretization schemes. Additionally, several recent work~\citep{muehlebach2019dynamical,bailey2019multi,maddison2018hamiltonian, ryu2019ode} cast game optimization algorithms as dynamical systems so to leverage its rich theory, to study the stability and convergence of various continuous-time methods.~\cite{nagarajan2017gradient} also analyzes the local stability of GANs as an approximated continuous dynamical system.
\section{Conclusion and Future Direction}
In this paper, we leverage tools from physics to propose a novel second-order optimization scheme LEAD, to address the issue of rotational dynamics in min-max games. By casting min-max game optimization as a physical system, we use the principle of least action to discover an effective optimization algorithm for this setting. Subsequently, with the use of Lyapunov stability theory and spectral analysis, we prove LEAD to be convergent at a linear rate in bilinear min-max games. We supplement our theoretical analysis with experiments on GANs and toy setups, demonstrating improvements over baseline methods. Specifically for GAN training, we observe that our method outperforms other second-order methods, both in terms of sample quality and computational efficiency. Our analysis underlines the advantages of physical approaches in designing novel optimization algorithms for games as well as for traditional optimization tasks. 
It is important to note in this regard that our crafted physical system is \textit{a} way to model min-max optimization physically. Alternate schemes to perform such modeling can involve other choices of counter-rotational and dissipative forces which can be explored in future work.
\textcolor{black}{Other directions for future work can extend the Lyapunov analysis to the general convex-concave setting. Our existent analysis and the subsequent promising experimental results there upon, makes us believe that analysis can be extended to the more general convex-concave setting. Nevertheless, such an attempt is dependent on finding an appropriate Lyapunov function which may be challenging given the complex game dynamics.
}

\textcolor{black}{ As a future direction, one may pursue other types of games where LEAD may be useful. Any 2-parameter dynamical system can be interpreted as a 2-player game. Since LEAD models second-order interactions of the players, it may be a preferable optimization algorithm for games with higher-order structure. Furthermore, studying the performance of LEAD on larger GAN architecture such as Style-GAN may be studied in future work.
}
\subsubsection*{Broader Impact Statement}
While our contribution is mostly theoretical, our research has the potential to improve the optimization of multi-agent machine learning models, such as generative adversarial networks (GANs). GANs have been very successful in generating realistic images, music, speech and text, and for improving performance on an array of different real-world tasks. On the other hand, GANs can be misused to generate fake news, fake images, and fake voices. 
Furthermore, a common problem encountered during GAN training is mode-collapse. This results in GANs being biased in generating certain types of data moreover others, thereby causing data misrepresentation. In this paper, we show that our proposed method can tackle the mode collapse problem by observing improvements over baseline methods. However, we would like to emphasize that practitioners should use our research with caution as the change of dataset and tasks might not prevent the mode collapse problem.

\subsubsection*{Acknowledgments}
The authors would like to thank Mohammad Pezeshki, Gauthier Gidel, Tatjana Chavdarova, Maxime Laborde, Nicolas Loizou, Hugo Berard, Giancarlo Kerg, Manuela Girotti, Adam Ibrahim, Damien Scieur and Michael Mulligan, for useful discussions and feedback.

This work is supported by NSERC Discovery Grants (RGPIN-2018-04821 and RGPIN-2019-06512), FRQNT Young Investigator Startup Grants (2019-NC-253251 and 2019-NC-257943), a startup grant by IVADO, the Canada CIFAR AI chairs program and a collaborative grant from Samsung Electronics Co., Ltd. 

Reyhane Askari Hemmat also acknowledges support of Borealis AI Graduate Fellowship, Microsoft Diversity Award and the NSERC Postgraduate Scholarships. Amartya Mitra acknowledges support of the internship program at Mila and the UCR Graduate Fellowship.

This research was enabled in part by compute resources, software and technical help provided by Mila (mila.quebec) and Compute Canada.
\bibliography{main}
\bibliographystyle{tmlr}

\appendix
\newpage

\section{Appendix}

\section{Proof of Proposition \ref{prop:eom_discrete}}\label{app:symplectic}
\begin{proof}
The EOMs of the quadratic game in continuous-time (Eq.(\ref{eq:EOM})), can be discretized in using a combination of implicit and explicit update steps as~\citep{shi2019acceleration},
\begin{subequations}
    \begin{align}
        x_{k + 1} - x_k &= \delta v_{k+1}^x, \label{eq:eom_final_a}\\
        y_{k + 1} - y_k &= \delta v_{k+1}^y, \label{eq:eom_final_c}\\
        v_{k + 1}^x - v_k^x &= - {q} \delta \nabla_{xy}f\left(x_k, y_k\right)v_{k}^y - {\mu} \delta v_{k}^x - \delta\nabla_{x}f\left(x_k, y_k\right) \label{eq:eom_final_b} \\
        v_{k + 1}^y - v_k^y &=  {q} \delta \nabla_{xy}f\left(x_k, y_k\right)v_{k}^x - {\mu} \delta v_{k}^y + \delta\nabla_{y}f\left(x_k, y_k\right) \label{eq:eom_final_d}
    \end{align}
\end{subequations}    
where $\delta$ is the discretization step-size. Using Eqns.(\ref{eq:eom_final_a}) and~(\ref{eq:eom_final_c}), we can further re-express Eqns.~(\ref{eq:eom_final_b}),~(\ref{eq:eom_final_d}) as,
\begin{equation}
    \begin{split}
    x_{k + 1} &= x_k + \beta \Delta x_k - \eta \nabla_{x} f\left(x_k, y_k\right) - \alpha \nabla_{x,y}f\left(x_k, y_k\right) \Delta y_k \\
        y_{k + 1} &= y_k + \beta \Delta y_k + \eta \nabla_{y} f\left(x_k, y_k\right) + \alpha \nabla_{x,y}f\left(x_k, y_k\right) \Delta x_k \\
    \end{split}
\end{equation}
where $\Delta x_k = x_k - x_{k - 1}$, and,
\begin{equation}\label{eq:symplectic_hyper}
    \begin{split}
        \beta = 1 - {\mu} \delta,\
        \eta = {\delta}^2,\
        \alpha = 2 {q} \delta
    \end{split}
\end{equation}
\end{proof}
\section{Continuous-time Convergence Analysis: Quadratic Min-Max Game}\label{app:general_bi_ly}
\begin{proof}
For the class of quadratic min-max games,
\begin{equation}\label{eq:quadratic_game}
    f\left({\bm{X}},{\bm{Y}}\right) = \frac{h}{2}|\bm{X}|^2 - \frac{h}{2}|\bm{Y}|^2 + {\bm{X}}^T\mathbb{A}{\bm{Y}}
\end{equation}
where ${\bm{X}}\equiv\left(X^1,\cdots,X^n\right), {\bm{Y}}\equiv\left(Y^1,\cdots,Y^n\right) \in \mathbb{R}^n$ and $\mathbb{A}_{n\times n}$ is a constant positive-definite matrix, the continuous-time EOMs of Eq.(\ref{eq:EOM}) become:
\begin{equation}\label{eq:quad_game_app}
    \begin{split}
        \ddot{\bm{X}} &= - {\mu}{\dot{\bm{X}}} - h \bm{X} - \mathbb{A}\bm{Y} - {q}\mathbb{A}\dot{\bm{Y}} \\
        \ddot{\bm{Y}} &= - {\mu}{\dot{\bm{Y}}} - h \bm{Y} + \mathbb{A}^T\bm{X} + {q}\mathbb{A}^T\dot{\bm{X}}\\
    \end{split}
\end{equation}
We next define our continuous-time Lyapunov function in this case to be,
\begin{equation}\label{eq:lyapunov}
    \begin{split}
        \mathcal{E}_t &= \frac{1}{2}\left(\dot{\bm{X}} + \mu {\bm{X}} + \mu\mathbb{A} {\bm{Y}}\right)^T\left(\dot{\bm{X}} + \mu {\bm{X}} + \mu \mathbb{A} {\bm{Y}}\right) \\
        &\qquad + \frac{1}{2}\left(\dot{\bm{Y}} + \mu {\bm{Y}} - \mu \mathbb{A}^T {\bm{X}}\right)^T\left(\dot{\bm{X}} + \mu {\bm{Y}} - \mu \mathbb{A}^T {\bm{X}}\right) \\
        &\qquad + \frac{1}{2}\left(\dot{\bm{X}}^T\dot{\bm{X}} + \dot{\bm{Y}}^T\dot{\bm{Y}}\right) + {\bm{X}}^T (h + \mathbb{A}\mathbb{A}^T) {\bm{X}} + {\bm{Y}}^T (h + \mathbb{A}^T\mathbb{A}) {\bm{Y}}\\
        &\ge 0\ \forall\ t
    \end{split}
\end{equation}
The time-derivative of $\mathcal{E}_t$ is then given by,
\begin{equation}\label{eq:der_lyapunov_gen_bi}
    \begin{split}
        \dot{\mathcal{E}}_t &= \left(\dot{\bm{X}} + \mu {{\bm{X}}} + \mu \mathbb{A} {\bm{Y}}\right)^T\left(\ddot{\bm{X}} + \mu \dot{\bm{X}} + \mu \mathbb{A} \dot{\bm{Y}}\right) + \left(\dot{\bm{Y}} + \mu {\bm{Y}} - \mu \mathbb{A}^T {\bm{X}}\right)^T\left(\ddot{\bm{Y}} + \mu \dot{\bm{Y}} - \mu \mathbb{A}^T \dot{\bm{X}}\right) \\
        &\qquad + \left(\dot{\bm{X}}^T\ddot{\bm{X}} + \dot{\bm{Y}}^T\ddot{\bm{Y}}\right) + 2\left({\bm{X}}^T (h + \mathbb{A}\mathbb{A}^T) \dot{\bm{X}} + {\bm{Y}}^T (h + \mathbb{A}^T\mathbb{A}) \dot{\bm{Y}}\right)\\
        &= \left(\dot{\bm{X}}^T + \mu {\bm{X}}^T + \mu {\bm{Y}}^T \mathbb{A}^T\right)\left(\left(-q + \mu\right) \mathbb{A} \dot{{\bm{Y}}} - \mathbb{A}{\bm{Y}}\right) + \dot{{\bm{X}}}^T\left(-q\mathbb{A}\dot{{\bm{Y}}} - \mu \dot{\bm{X}} - \mathbb{A}{\bm{Y}}\right)\\
        &\qquad + \left(\dot{{\bm{Y}}}^T + \mu {\bm{Y}}^T - \mu {\bm{X}}^T \mathbb{A}\right)\left(\left(q - \mu\right)\mathbb{A}^T \dot{{\bm{X}}} + \mathbb{A}^T{\bm{X}}\right) + \dot{\bm{Y}}^T\left(q\mathbb{A}^T\dot{\bm{X}} - \mu\dot{\bm{Y}} + \mathbb{A}^T{\bm{X}}\right)\\
        &\qquad + 2\left({\bm{X}}^T (h + \mathbb{A}\mathbb{A}^T) \dot{\bm{X}} + {\bm{Y}}^T (h + \mathbb{A}^T\mathbb{A}) \dot{\bm{Y}}\right)\\
        &= \left(\mu\left(q - \mu\right) - 2 \right)\left({\bm{Y}}^T\mathbb{A}^T \dot{\bm{X}} - {\bm{X}}^T \mathbb{A} \dot{\bm{Y}}\right) - \left(\mu\left(q - \mu\right) - 2\right) \left({\bm{X}}^T\mathbb{A}\mathbb{A}^T \dot{\bm{X}} + {\bm{Y}}^T \mathbb{A}^T\mathbb{A} \dot{\bm{Y}}\right) \\
        &\qquad - \mu \left({\bm{X}}^T (h + \mathbb{A}\mathbb{A}^T){\bm{X}} + {\bm{Y}}^T (h + \mathbb{A}^T\mathbb{A}){\bm{Y}}\right) - \mu \left(\dot{\bm{X}}^T\dot{\bm{X}} + \dot{\bm{Y}}^T\dot{\bm{Y}}\right) \\
    \end{split}
\end{equation}
where we have used the fact that ${\bm{X}}^T\mathbb{A}{\bm{Y}}$ being a scalar thus implying ${\bm{X}}^T\mathbb{A}{\bm{Y}} = {\bm{Y}}^T\mathbb{A}^T{\bm{X}}$. If we now set $q = \left(2/\mu\right) + \mu$ in the above, then that further leads to,
\begin{equation}\label{eq:der_lyapunov_gen_bi_3}
    \begin{split}
        \dot{\mathcal{E}}_t &= - \mu \left({\bm{X}}^T(h + \mathbb{A}\mathbb{A}^T){\bm{X}} + {\bm{Y}}^T (h + \mathbb{A}^T\mathbb{A}){\bm{Y}}\right) - \mu \left(\dot{\bm{X}}^T\dot{\bm{X}} + \dot{\bm{Y}}^T\dot{\bm{Y}}\right) \\
        &= - \mu \left(h||\bm{X}||^2 + h||\bm{Y}||^2 + \left|\left|\mathbb{A}^T{\bm{X}}\right|\right|^2 + \left|\left|\mathbb{A}{\bm{Y}}\right|\right|^2 \right) - \mu \left(\left|\left|\dot{\bm{X}}\right|\right|^2 + \left|\left|\dot{\bm{Y}}\right|\right|^2\right) \le 0\ \forall\ t\\
    \end{split}
\end{equation}
exhibiting that the Lyapunov function, Eq.(\ref{eq:lyapunov_gen_bi}) is \textit{asymptotically stable} at all times $t$. 

Next, consider the following expression,
\begin{equation}\label{eq:cont_conv_rate_gen_bi}
    \begin{split}
        &-\rho\mathcal{E}_t - \frac{\rho\mu}{2}\left|\left|{\bm{X}} - \dot{\bm{X}}\right|\right|^2 - \frac{\rho\mu}{2}\left|\left|{\bm{Y}} - \dot{\bm{Y}}\right|\right|^2 - \frac{\rho\mu}{2}\left|\left|\dot{\bm{X}} - \mathbb{A}{\bm{Y}}\right|\right|^2 - \frac{\rho\mu}{2}\left|\left|\mathbb{A}^T{\bm{X}} + \dot{\bm{Y}}\right|\right|^2\\
        &\qquad = -\rho\mathcal{E}_t - \frac{\rho\mu}{2}\left(\left|\left|{\bm{X}}\right|\right|^2 + \left|\left|{\bm{Y}}\right|\right|^2\right) + \rho\mu\left( {\bm{X}}^T\dot{\bm{X}} + {\bm{Y}}^T\dot{\bm{Y}}\right) - \rho\mu\left(\left|\left|\dot{\bm{X}}\right|\right|^2 + \left|\left|{\bm{Y}}\right|\right|^2\right) \\
        &\qquad\quad - \rho\mu\left({\bm{X}}^T\mathbb{A}\dot{\bm{Y}} - \dot{\bm{X}}^T\mathbb{A}{\bm{Y}}\right) - \frac{\rho\mu}{2}\left(\left|\left|\mathbb{A}^T{\bm{X}}\right|\right|^2 + \left|\left|\mathbb{A}{\bm{Y}}\right|\right|^2\right)\\
        &\qquad = -\rho\left(1 + \mu\right)\left(\left|\left|\dot{\bm{X}}\right|\right|^2 + \left|\left|\dot{\bm{Y}}\right|\right|^2\right) - \frac{\rho}{2}\left(\mu^2 + \mu + 2h\right)\left(\left|\left|{\bm{X}}\right|\right|^2 + \left|\left|{\bm{Y}}\right|\right|^2\right) \\
        &\qquad\quad - \frac{\rho}{2}\left(\mu^2 + \mu + 2\right)\left(\left|\left|\mathbb{A}^T {\bm{X}}\right|\right|^2 + \left|\left|\mathbb{A} {\bm{Y}}\right|\right|^2\right) \\
        &\qquad \le -\rho\mathcal{E}_t
    \end{split}
\end{equation}
where $\rho$ is some positive definite constant. This implies that the above expression is negative semi-definite by construction given $\mu \ge 0$. Now, for a general square matrix $\mathbb{A}$, we can perform a singular value decomposition (SVD) as $\mathbb{A} = \mathbb{V}^\text{T} \mathbb{S} \mathbb{U}$. Here, $\mathbb{U}$ and $\mathbb{V}$ are the right and left unitaries of $\mathbb{A}$, while $\mathbb{S}$ is a diagonal matrix of singular values ($\sigma_i$) of $\mathbb{A}$. Using this decomposition in Eq.(\ref{eq:cont_conv_rate_gen_bi}), then allows us to write,
\begin{equation}\label{eq:cont_conv_rate_gen_bi_2}
    \begin{split}
        & -\rho\left(1 + \mu\right)\left(\left|\left|\dot{\bm{X}}\right|\right|^2 + \left|\left|\dot{\bm{Y}}\right|\right|^2\right) - \frac{\rho}{2}\left(\mu^2 + \mu + 2h\right)\left(\left|\left|{\bm{X}}\right|\right|^2 + \left|\left|{\bm{Y}}\right|\right|^2\right) \\
        &\qquad\qquad - \frac{\rho}{2}\left(\mu^2 + \mu + 2\right)\left(\left|\left|\mathbb{A}^T {\bm{X}}\right|\right|^2 + \left|\left|\mathbb{A} {\bm{y}}\right|\right|^2\right) \\
        &\quad = -\rho\left(1 + \mu\right)\left(\left|\left|\mathbb{V} \dot{\bm{X}}\right|\right|^2 + \left|\left|\mathbb{U}\dot{\bm{Y}}\right|\right|^2\right) - \frac{\rho}{2}\left(\mu^2 + \mu + 2h\right)\left(\left|\left|\mathbb{V} {\bm{X}}\right|\right|^2 + \left|\left|\mathbb{U} {\bm{Y}}\right|\right|^2\right) \\
        &\qquad\quad - \frac{\rho}{2}\left(\mu^2 + \mu + 2\right)\left(\left|\left|\mathbb{S} \mathbb{V} {\bm{X}}\right|\right|^2 + \left|\left|\mathbb{S} \mathbb{U} {\bm{Y}}\right|\right|^2\right) \\
        &\quad = -\rho\left(1 + \mu\right)\left(\left|\left| \dot{\bm{\mathcal{X}}}\right|\right|^2 + \left|\left|\dot{\bm{\mathcal{Y}}}\right|\right|^2\right) - \frac{\rho}{2}\left(\mu^2 + \mu + 2h\right)\left(\left|\left|\bm{\mathcal{X}}\right|\right|^2 + \left|\left|\bm{\mathcal{Y}}\right|\right|^2\right) \\
        &\qquad\qquad - \frac{\rho}{2}\left(\mu^2 + \mu + 2\right)\left(\left|\left|\mathbb{S} \bm{\mathcal{X}}\right|\right|^2 + \left|\left|\mathbb{S} \bm{\mathcal{Y}}\right|\right|^2\right) \\
        &\quad = - \sum_{j=1}^n \rho\left(1 + \mu\right) \left(\left|\left| \dot{\mathcal{X}}^j\right|\right|^2 + \left|\left|\dot{\mathcal{Y}}^j\right|\right|^2\right) \\
        &\qquad\qquad - \sum_{j=1}^n \frac{\rho}{2}\left(\left(1 + \sigma_j^2 + 2h\right)\left(\mu^2 + \mu\right) + 2\sigma_j^2\right) \left(\left|\left|\mathcal{X}^j\right|\right|^2 + \left|\left|\mathcal{Y}^j\right|\right|^2\right) \\
    \end{split}
\end{equation}
where we have made use of the relations $\mathbb{U}^\text{T}\mathbb{U} = \mathbb{U}\mathbb{U}^\text{T} = \mathbb{I}_n = \mathbb{V}^\text{T}\mathbb{V} = \mathbb{V}\mathbb{V}^\text{T}$, and additionally performed a basis change, as $\bm{\mathcal{X}} = \mathbb{V} {\bm{X}}$ and $\bm{\mathcal{Y}} = \mathbb{U} {\bm{Y}}$. Now, we know from Eq.(\ref{eq:der_lyapunov_gen_bi_3}) that,
\begin{equation}
    \begin{split}
        \dot{\mathcal{E}}_t &= - \mu \left(h||\bm{X}||^2 + h||\bm{Y}||^2 + \left|\left|\mathbb{A}^T\bm{X}\right|\right|^2 + \left|\left|\mathbb{A}\bm{\bm{Y}}\right|\right|^2 \right) - \mu \left(\left|\left|\dot{\bm{X}}\right|\right|^2 + \left|\left|\dot{\bm{Y}}\right|\right|^2\right) \\
        &= - \mu \left(h||\bm{X}||^2 + h||\bm{Y}||^2 + \left|\left|\mathbb{U}^\text{T} \mathbb{S} \mathbb{V}\bm{X}\right|\right|^2 + \left|\left|\mathbb{V}^\text{T} \mathbb{S} \mathbb{U}\bm{Y}\right|\right|^2 \right) - \mu \left(\left|\left|\mathbb{V}\dot{\bm{X}}\right|\right|^2 + \left|\left|\mathbb{U}\dot{\bm{Y}}\right|\right|^2\right) \\
        &= - \mu \left(h||\bm{\mathcal{X}}||^2 + h||\bm{\mathcal{Y}}||^2 + \left|\left|\mathbb{S} \bm{\mathcal{X}}\right|\right|^2 + \left|\left|\mathbb{S} \bm{\mathcal{Y}}\right|\right|^2 \right) - \mu \left(\left|\left|\dot{\bm{\mathcal{X}}}\right|\right|^2 + \left|\left|\dot{\bm{\mathcal{Y}}}\right|\right|^2\right) \\
        &= - \sum_{j=1}^n \mu \left(\sigma_j^2 + h\right) \left(\left|\left| \mathcal{X}^j\right|\right|^2 + \left|\left| \mathcal{Y}^j\right|\right|^2 \right) - \sum_{j=1}^n \mu  \left(\left|\left|\dot{\mathcal{X}}^j\right|\right|^2 + \left|\left|\dot{\mathcal{Y}}^j\right|\right|^2\right) \\
    \end{split}    
\end{equation}
Comparing the above expression with Eq.(\ref{eq:cont_conv_rate_gen_bi_2}), we note that a choice of $\rho$ as,
\begin{equation}\label{eq:choice_rho}
    \begin{split}
        \rho \le \min\left\{\frac{\mu}{1 + \mu},\ \frac{2 \mu (\sigma_{\min}^2 + h)}{\left(1 + \sigma_{\min}^2 + 2h\right)\left(\mu^2 + \mu\right) + 2\sigma_{\min}^2}\right\}\ \forall\ j\in\left[1, n\right]
    \end{split}
\end{equation}
implies,
\begin{equation}
    \begin{split}
        \dot{\mathcal{E}}_t &\le - \rho \mathcal{E} \\
        \Rightarrow \mathcal{E}_t &\le \mathcal{E}_{0}\exp{\left(-\rho t\right)}\\ 
        \Rightarrow X^T \left(h + \mathbb{A}\mathbb{A}^T \right) X + Y^T \left(h + \mathbb{A}^T\mathbb{A} \right) Y &\le \mathcal{E}_{0}\exp{\left(-\rho t\right)}\\ 
        \Rightarrow \mathcal{X}^T (h + \mathbb{S}^2) \mathcal{X} + \mathcal{Y}^T (h + \mathbb{S}^2) \mathcal{Y} &\le \mathcal{E}_{0}\exp{\left(-\rho t\right)}\\
        \Rightarrow \sum_{j=1}^n (h + \sigma_j^2) \left(||\mathcal{X}^j||^2 + ||\mathcal{Y}^j||^2\right) &\le \mathcal{E}_{0}\exp{\left(-\rho t\right)}\\
        \Rightarrow \sum_{j=1}^n (h + \sigma_j^2) \left(||{X}^j||^2 + ||{Y}^j||^2\right) &\le \mathcal{E}_{0}\exp{\left(-\rho t\right)}\\
        \therefore\ ||\bm{X}||^2 + ||\bm{Y}||^2 &\le \frac{\mathcal{E}_{0}}{h + \sigma_\text{min}^2}\exp{\left(-\rho t\right)}\ \forall\ j\\
    \end{split}
\end{equation}

\end{proof}

\begin{figure}[ht]
\centering
    \includegraphics[width=0.7\linewidth]{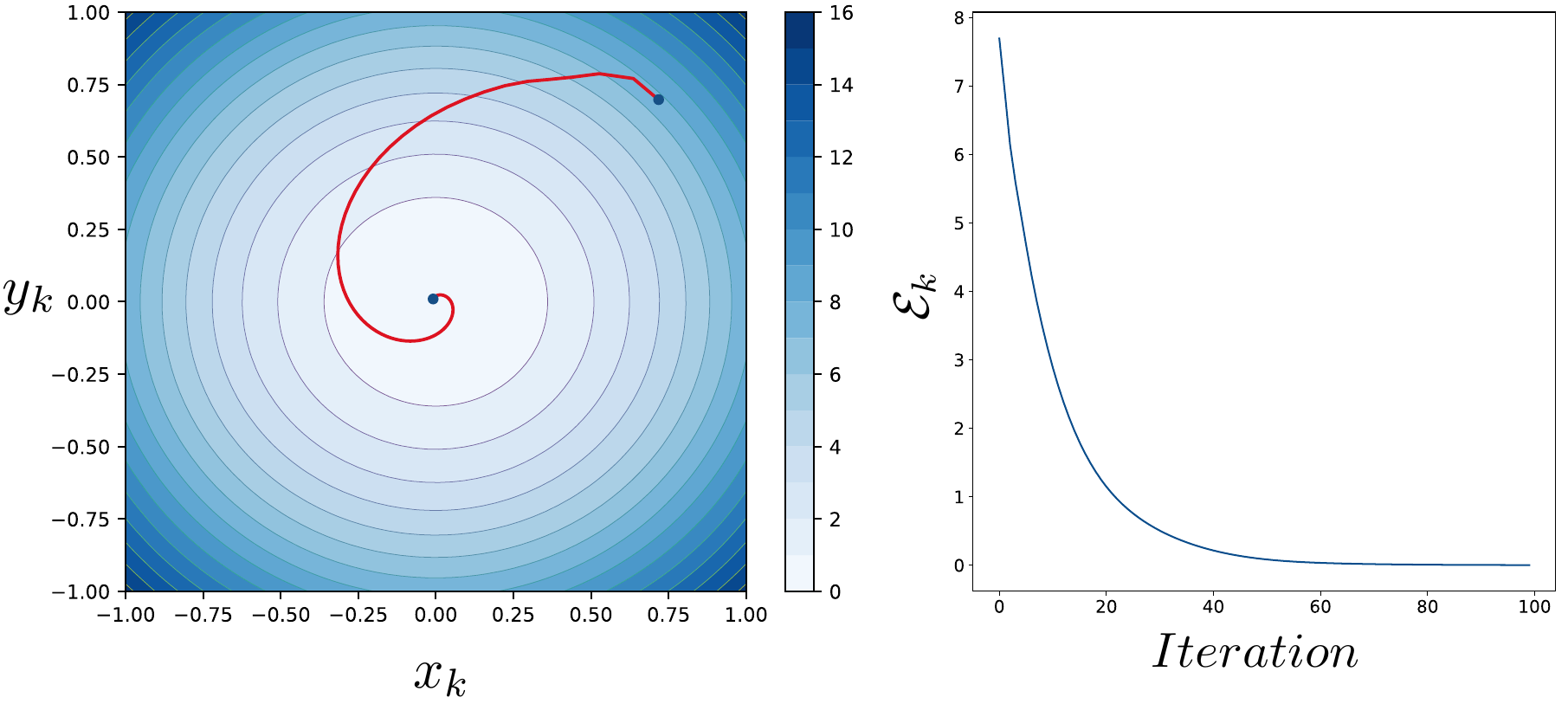}
    \caption{\small \textbf{Left:} {\color{black}Contours of the Lyapunov function $\mathcal{E}_k$, Eq.~(\ref{eq:lyapunov}) (black), and convergence trajectory of LEAD (red) in the quadratic min-max game (Eq.(\ref{eq:quad_game_app})) to the Nash equilibrium $(0,0)$. \textbf{Right}: The evolution of the discrete-time Lyapunov function of Eq.~(\ref{eq:lyapunov}) over iteration, confirming $\mathcal{E}_k - \mathcal{E}_{k-1} \le 0\ \forall\ k\in\mathbf{N}$.}}
 \label{fig:lyapunov}
\end{figure}

\section{Proof of Theorem \ref{th:spectral_evalues}}\label{app:spectral_evalues}
\begin{theorem*}
The eigenvalues of  $\nabla F_{\text{LEAD}} (\bm{\omega}^*)$ about the Nash equilibrium $\bm{\omega}^{*}=\left(x^{*}, y^{*}\right)$ of the quadratic min-max game are,
\begin{equation}
        \mu_{\pm} (\alpha, \beta, \eta) = \frac{1 - (\eta + \alpha)\lambda + \beta - \eta h \pm \sqrt{\Delta}}{2}\\
\end{equation}
where, $\Delta = \left(1 - \left(\eta + \alpha\right)\lambda + \beta - \eta h\right)^2 - 4\left(\beta - \alpha \lambda \right)$ and $\lambda \in \text{Sp}(\text{off-diag}[\nabla \bm{v} (\bm{\omega}^*)])$. Furthermore, for $h, \eta, |\alpha|, |\beta| << 1$, we have,
\begin{equation}
    \begin{split}
        \mu^{(i)}_{+} (\alpha, \beta, \eta) &\approx 1 - \eta h + \frac{\left(\eta + \alpha\right)^2\lambda_i^2 + \eta^2 h^2 + \beta^2 - 2\eta h \beta}{4} \\
        &\quad + \lambda_i\left(\frac{\eta + \alpha}{2}\left(\eta h - \beta\right) - \eta\right) \\
        \mu^{(i)}_{-} (\alpha, \beta, \eta) &\approx \beta - \frac{\left(\eta + \alpha\right)^2\lambda_i^2 + \eta^2 h^2 + \beta^2 - 2\eta h \beta}{4} \\
        &\qquad + \lambda_i\left(\frac{\eta + \alpha}{2}\left(\beta - \eta h\right) - \alpha\right) \\
    \end{split}
\end{equation}
% and 
% \begin{equation}
% \mu_{-} (\alpha, \eta) \approx - \alpha \lambda (\frac{1}{1 - \lambda \eta}) + \mathcal{O} (\alpha^2) 
% \end{equation}
\end{theorem*}
\begin{proof}
For the quadratic game~\ref{eq:quadratic_game}, the Jacobian of the vector field $\bm{v}$ is given by,
\begin{equation} 
\nabla \bm{v} \equiv \nabla
\begin{bmatrix}
        \nabla_{x}  f({\bm{x}}_t, {\bm{y}}_t)\\
        -\nabla_{y}  f({\bm{x}}_t, {\bm{y}}_t)\\
    \end{bmatrix} =
\begin{bmatrix}
    % \mathbf{H}^\top & \mathbf{A}\\
    % -\mathbf{A}^\top & \mathbf{G}^\top\\
    h\mathbb{I}_{2n} & \mathbb{A}\\
    -\mathbb{A}^\top & h\mathbb{I}_{2n}\\
\end{bmatrix}
\in\mathds{R}^{2n}\times\mathds{R}^{2n}.
\end{equation}
Let us next define a matrix $\mathbb{D}_q$ as,
\begin{equation}
\mathbb{D}_q = 
\begin{bmatrix}
    \nabla^2_{xy} f({\bm{x}}, {\bm{y}}) & 0\\
    0 & -\nabla^2_{xy} f({\bm{x}}, {\bm{y}}) \\
\end{bmatrix} 
=
\begin{bmatrix}
    \mathbb{A} & 0\\
    0 & -\mathbb{A}^\top \\
\end{bmatrix} \in \mathds{R}^{2n}\times\mathds{R}^{2n}
\end{equation}
% For the bilinear min-max game, we thus have,
% \begin{equation}
%     \mathbb{D}\equiv\mathbb{D}_\text{q} = 
% \begin{bmatrix}
%     \mathbb{A} & 0\\
%     0 & -\mathbb{A}^\top \\
% \end{bmatrix}
% \in \mathds{R}^{2n}\times\mathds{R}^{2n}.
% \end{equation}
Consequently, the update rule for LEAD can be written as:
\begin{equation}\label{eq:lead}
\begin{split}
    \begin{bmatrix}
        {\bm{x}}_{t+1}\\
        {\bm{y}}_{t+1}\\
    \end{bmatrix} &= 
    \begin{bmatrix}
        {\bm{x}}_{t}\\
        {\bm{y}}_{t}\\
    \end{bmatrix} 
    + \beta 
    \begin{bmatrix}
        {\bm{x}}_{t} - {\bm{x}}_{t-1}\\
        {\bm{y}}_{t} - {\bm{y}}_{t-1}\\
    \end{bmatrix} 
    - \eta
    \begin{bmatrix}
        \nabla_{x}  f({\bm{x}}_t, {\bm{y}}_t)\\
        -\nabla_{y}  f({\bm{x}}_t, {\bm{y}}_t)\\
    \end{bmatrix} 
    - \alpha 
    \begin{bmatrix}
        \nabla^2_{xy}  f({\bm{x}}_t, {\bm{y}}_t) \Delta {\bm{y}}_t\\
        -\nabla^2_{xy}  f({\bm{x}}_t, {\bm{y}}_t) \Delta {\bm{x}}_t\\
    \end{bmatrix} \\
    &= \begin{bmatrix}
        {\bm{x}}_{t}\\
        {\bm{y}}_t\\
    \end{bmatrix} 
    + \beta 
    \begin{bmatrix}
        {\bm{x}}_{t} - {\bm{x}}_{t-1}\\
        {\bm{y}}_{t} - {\bm{y}}_{t-1}\\
    \end{bmatrix} 
    - \eta \bm{v} - \alpha \mathbb{D}_\text{q}
    \begin{bmatrix}
        \Delta {\bm{y}}_t\\
        \Delta {\bm{x}}_t\\
    \end{bmatrix}
\end{split}
\end{equation}
where $\Delta {\bm{y}}_t = {\bm{y}}_t - {\bm{y}}_{t-1}$ and $\Delta {\bm{x}}_t = {\bm{x}}_t - {\bm{x}}_{t-1}$.

Next, by making use of the permutation matrix $\mathbb{P}$,
\begin{equation*}
    \mathbb{P} := 
    \begin{bmatrix}
        0 & \mathbb{I}_{n}\\
        \mathbb{I}_{n} & 0\\
    \end{bmatrix} \in \mathds{R}^{2n} \times \mathds{R}^{2n}
\end{equation*} 
we can re-express Eq.~\ref{eq:lead} as,
\begin{equation}\label{eq:new_lead}
\begin{split}
    \begin{bmatrix}
    {\bm{\omega}}_{t+1}\\
    {\bm{\omega}}_{t}\\
\end{bmatrix} &= 
\begin{bmatrix}
    \mathbb{I}_{2n} & 0\\
    \mathbb{I}_{2n} & 0\\
\end{bmatrix} 
\begin{bmatrix}
    \bm\omega_{t}\\
    \bm\omega_{t-1}\\
\end{bmatrix} + \beta \begin{bmatrix}
    \mathbb{I}_{2n} & -\mathbb{I}_{2n}\\
    0 & 0\\
\end{bmatrix} \begin{bmatrix}
    \bm\omega_{t}\\
    \bm\omega_{t-1}\\
\end{bmatrix} - \eta 
\begin{bmatrix}
    \bm{v}\\
    0\\
\end{bmatrix} - \alpha 
\begin{bmatrix}
    \mathbb{D}_\text{q} & 0\\
    0 & 0 \\
\end{bmatrix}\begin{bmatrix}
    \mathbb{P} & -\mathbb{P}\\
    0 & 0\\
\end{bmatrix} \begin{bmatrix}
    \bm\omega_{t}\\
    \bm\omega_{t-1}\\
\end{bmatrix}\\
&= \begin{bmatrix}
    \mathbb{I}_{2n} & 0\\
    \mathbb{I}_{2n} & 0\\
\end{bmatrix} 
\begin{bmatrix}
    \bm\omega_{t}\\
    \bm\omega_{t-1}\\
\end{bmatrix} + \beta \begin{bmatrix}
    \mathbb{I}_{2n} & -\mathbb{I}_{2n}\\
    0 & 0\\
\end{bmatrix} \begin{bmatrix}
    \bm\omega_{t}\\
    \bm\omega_{t-1}\\
\end{bmatrix} - \eta 
\begin{bmatrix}
    \bm{v}\\
    0\\
\end{bmatrix} - \alpha 
\begin{bmatrix}
    \mathbb{D}_\text{q} \mathbb{P} & -\mathbb{D}_\text{q} \mathbb{P}\\
    0 & 0 \\
\end{bmatrix}
\begin{bmatrix}
    \bm\omega_{t}\\
    \bm\omega_{t-1}\\
\end{bmatrix}\\
\end{split}
\end{equation}
where $\bm\omega_{t} \equiv \left(\bm{x}_t, \bm{y}_t\right)$. Hence, the Jacobian of $F_\text{LEAD}$ is then given by,
\begin{equation}\label{eq:jacobian_f_slead_}
\begin{split}
\nabla F_\text{LEAD} &= 
\begin{bmatrix}
    \mathbb{I}_{2n} & 0\\
    \mathbb{I}_{2n} & 0\\
\end{bmatrix} 
+ \beta \begin{bmatrix}
    \mathbb{I}_{2n} & -\mathbb{I}_{2n}\\
    0 & 0\\
\end{bmatrix} - \eta 
\begin{bmatrix}
    \nabla \bm{v} & 0\\
    0 & 0\\
\end{bmatrix} - \alpha \begin{bmatrix}
    \mathbb{D}_\text{q} \mathbb{P} & - \mathbb{D}_\text{q} \mathbb{P}\\
    0 & 0\\
\end{bmatrix} \\
&= \begin{bmatrix}
    \left(1 + \beta\right)\mathbb{I}_{2n} - \eta \nabla \bm{v} - \alpha \mathbb{D}_\text{q} \mathbb{P} & \  - \beta \mathbb{I}_{2n} + \alpha \mathbb{D}_\text{q} \mathbb{P}\\
    \mathbb{I}_{2n} & 0\\
\end{bmatrix}
\end{split}
\end{equation}
It is to be noted that, for games of the form of Eq.~\ref{eq:quadratic_game}, we specifically have,
\begin{equation*}
    \begin{split}
        \nabla \bm{v} &= \mathbb{D}_\text{q}\mathbb{P} + h\mathbb{I}_{2n} \\
    \end{split}
\end{equation*}
and,
\begin{equation*}
    \begin{split}
        \text{off-diag}[\nabla \bm{v}] &= \mathbb{D}_\text{q}\mathbb{P} \\
    \end{split}
\end{equation*}
Therefore, Eq.~\ref{eq:jacobian_f_slead_} becomes,
\begin{equation}
\begin{split}
\nabla F_\text{LEAD} &= 
\begin{bmatrix}
    \left(1 + \beta - \eta h\right)\mathbb{I}_{2n} - \left(\eta + \alpha\right) \mathbb{D}_\text{q}\mathbb{P} & \ - \beta \mathbb{I}_{2n} + \alpha \mathbb{D}_\text{q} \mathbb{P}\\
    \mathbb{I}_{2n} & 0\\
\end{bmatrix}
\end{split}
\end{equation}
We next proceed to study the eigenvalues of this matrix which will determine the convergence properties of LEAD around the Nash equilibrium. Using Lemma 1 of \citep{gidel2019negative}, we can then write the characteristic polynomial of $\nabla F_\text{LEAD}$ as,
\begin{equation}
    \begin{split}
        &\text{det}\left(X\mathbb{I}_{4n} - \nabla F_\text{LEAD}\right) = 0\\
        \Rightarrow\ &\text{det}\left(
        \begin{bmatrix}
            \left(X - 1\right)\mathbb{I}_{2n} - \left(\beta - \eta h\right) \mathbb{I}_{2n} + \left(\eta + \alpha\right) \mathbb{D}_\text{q} \mathbb{P} & \ \beta \mathbb{I}_{2n} - \alpha \mathbb{D}_\text{q} \mathbb{P}\\
            - \mathbb{I}_{2n} & X\mathbb{I}_{2n}\\
        \end{bmatrix}\right) 
        = 0\\
        \Rightarrow\ &\text{det}\left(
        \begin{bmatrix}
            \left(X - 1\right)\left(X - \beta\right)\mathbb{I}_{2n} + X\eta h\mathbb{I}_{2n} + \left(X\eta + X\alpha - \alpha\right) \mathbb{D}_\text{q} \mathbb{P}\\
        \end{bmatrix}\right) 
        = 0\\
        \Rightarrow\ &\text{det}\left(
        \begin{bmatrix}
            \left(\left(X - 1\right)\left(X - \beta\right) + X\eta h\right)\mathbf{U} \mathbf{U}^{-1} + \left(X\eta + X\alpha - \alpha\right) \mathbf{U} \Lambda \mathbf{U}^{-1}\\
        \end{bmatrix}\right) 
        = 0\\
        \Rightarrow\ &\text{det}\left(
        \begin{bmatrix}
            \left(\left(X - 1\right)\left(X - \beta\right) + X\eta h\right)\mathbb{I}_{2n} + \left(X\eta + X\alpha - \alpha\right)  \Lambda\\
        \end{bmatrix}\right) 
        = 0\\
        \Rightarrow\ &\prod_{i=1}^{2n} \left[ \left(X - 1\right)\left(X - \beta\right) + X\eta h + \left(X\eta + \alpha\left(X - 1\right)\right)  \lambda_i \right] = 0
    \end{split}
\end{equation}
Where, in the above, we have performed an eigenvalue decomposition of $\mathbb{D}_\text{q}\mathbb{P} = \mathbf{U} \Lambda\mathbf{U}^{-1}$.
Therefore,
\begin{equation}\label{eq:mu_pm}
    \begin{split}
        & X^2 - X\left(1 - (\eta + \alpha)\lambda_i + \beta - \eta h\right) + \beta - \alpha \lambda = 0,\ \lambda_i \in \text{Sp}(\mathbb{D}_\text{q}\mathbb{P})\\
        &\Rightarrow X^{(i)} \equiv \mu^{(i)}_\pm = \frac{1 - (\eta + \alpha)\lambda_i + \beta - \eta h \pm \sqrt{\Delta}}{2}\\
    \end{split}
\end{equation}
with,
\begin{equation}
    \Delta = \left(1 - \left(\eta + \alpha\right)\lambda_i + \beta - \eta h\right)^2 - 4\left(\beta - \alpha \lambda_i\right)
\end{equation}
Furthermore for $h, \eta, |\beta|, |\alpha| << 1$, we can approximate the above roots to be,
\begin{equation}
    \begin{split}
        % \mu^{(i)}_{+} (\alpha, \beta, \eta) &\approx 1 - \eta \lambda_i - \left(\beta - \alpha\lambda_i\right)\frac{\eta\lambda_i}{1 - \eta\lambda_i} + \mathcal{O} (\alpha^2) + \mathcal{O} (\beta^2)\\
        % \mu^{(i)}_{-} (\alpha, \beta, \eta) &\approx \frac{\beta - \alpha \lambda_i}{1 - \eta \lambda_i} + \mathcal{O} (\alpha^2) + \mathcal{O} (\beta^2)\\
        \mu^{(i)}_{+} (\alpha, \beta, \eta) &\approx 1 - \eta h + \frac{\left(\eta + \alpha\right)^2\lambda_i^2 + \eta^2 h^2 + \beta^2 - 2\eta h \beta}{4} + \lambda_i\left(\frac{\eta + \alpha}{2}\left(\eta h - \beta\right) - \eta\right) \\
        \mu^{(i)}_{-} (\alpha, \beta, \eta) &\approx \beta - \frac{\left(\eta + \alpha\right)^2\lambda_i^2 + \eta^2 h^2 + \beta^2 - 2\eta h \beta}{4} + \lambda_i\left(\frac{\eta + \alpha}{2}\left(\beta - \eta h\right) - \alpha\right) \\
    \end{split}
\end{equation}
\end{proof}

\section{Proof of Proposition \ref{prop:norm_eigen}}\label{app:norm_eigen}
\begin{prop*}
For any $\lambda \in \text{Sp}(\text{off-diag}[\nabla {\bm{v}}(\omega^*)])$,
\begin{equation}
    \nabla_{\alpha} \rho \left(\lambda\right) \big|_{\alpha = 0} < 0 \Leftrightarrow \eta \in \left( 0, \frac{2}{\operatorname{Im}(\lambda_\text{max})} \right),
\end{equation}
where $\operatorname{Im}(\lambda_\text{max})$ is the imaginary component of the largest eigenvalue $\lambda_\text{max}$.
\end{prop*}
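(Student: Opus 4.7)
} My plan is to pass from the max $\rho = \max\{|\mu_+|^2, |\mu_-|^2\}$ to a single analytic branch near $\alpha = 0$, then compute a one-variable derivative using the expansions \eqref{eq:mup}--\eqref{eq:mun} that Theorem \ref{th:spectral_evalues} already provides.

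First, I would identify the active branch. Setting $\alpha=0$ in \eqref{eq:mup}--\eqref{eq:mun} gives $\mu_+(0,\eta)=1-\eta\lambda$ and $\mu_-(0,\eta)=0$. For the bilinear game the spectrum of $\nabla\bf{v}$ is purely imaginary, so write $\lambda = i\sigma$ with $\sigma=\operatorname{Im}(\lambda)$; then $|\mu_+(0,\eta)|^2 = 1+\eta^2\sigma^2 > 0 = |\mu_-(0,\eta)|^2$. By continuity of both branches in $\alpha$, there is a neighborhood of $\alpha=0$ on which $\rho(\alpha,\eta,\lambda) = |\mu_+(\alpha,\eta)|^2$, so $\nabla_\alpha \rho(0) = \partial_\alpha |\mu_+|^2\big|_{\alpha=0}$.

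Next, I would linearize $\mu_+$ in $\alpha$. Write $\mu_+(\alpha,\eta) = A + \alpha B + O(\alpha^2)$ with $A = 1-\eta\lambda$ and $B = \eta\lambda^2/(1-\eta\lambda)$, as in \eqref{eq:mup}. Then
\begin{equation*}
|\mu_+|^2 = |A|^2 + 2\alpha\,\operatorname{Re}(A\bar{B}) + O(\alpha^2),
\end{equation*}
so the sign of $\nabla_\alpha\rho(0)$ is the sign of $\operatorname{Re}(A\bar{B})$.

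Finally, I would substitute $\lambda = i\sigma$ and simplify. A short calculation gives
\begin{equation*}
A\bar{B} \;=\; (1-i\eta\sigma)\cdot\frac{-\eta\sigma^2}{1+i\eta\sigma} \;=\; \frac{-\eta\sigma^2\,(1-i\eta\sigma)^2}{1+\eta^2\sigma^2},
\end{equation*}
so
\begin{equation*}
\operatorname{Re}(A\bar{B}) \;=\; \frac{-\eta\sigma^2\,(1-\eta^2\sigma^2)}{1+\eta^2\sigma^2}.
\end{equation*}
For $\eta>0$ and $\sigma\neq 0$ this is negative iff $1 - \eta^2\sigma^2 > 0$, i.e.\ iff $\eta < 1/|\sigma| = 1/\operatorname{Im}(\lambda)$, which is exactly the stated range. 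The ``$\Leftrightarrow$'' direction follows because the inequality on $\operatorname{Re}(A\bar{B})$ is strict and sign-controlled by the single factor $1-\eta^2\sigma^2$.

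The calculation itself is routine; the only subtle point is the initial reduction from $\rho$ (a $\max$, hence not a priori differentiable) to the single smooth branch $|\mu_+|^2$. That reduction is justified by the strict separation $|\mu_+(0,\eta)|^2 > |\mu_-(0,\eta)|^2$, so I expect the main care to be taken there; everything afterward is algebra in $\mathbb{C}$ with $\lambda$ purely imaginary.
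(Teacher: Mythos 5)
Your proposal is correct and follows essentially the same route as the paper: reduce $\rho$ to the branch $|\mu_+|^2$, differentiate via $\partial_\alpha|\mu_+|^2\big|_{\alpha=0}=2\operatorname{Re}\bigl(\mu_+(0)\overline{\mu}_+'(0)\bigr)$ using the expansion from Theorem~\ref{th:spectral_evalues}, and substitute $\lambda=i\sigma$; your final expression agrees with the paper's $\tfrac{2\eta}{|1-\eta\lambda|^2}(\eta^2\xi^4-\xi^2)$. The only (welcome) difference is that you explicitly justify why $\rho=|\mu_+|^2$ near $\alpha=0$ via the strict separation $|\mu_+(0)|^2>|\mu_-(0)|^2$, a step the paper asserts without argument.
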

We observe from Proposition~\ref{prop:norm_eigen} above that for $h, \eta, |\alpha|, |\beta| << 1$,
\begin{equation}
    \begin{split}
        \rho(\alpha, \eta, \beta) :=& \max \{|\mu^{(i)}_{+}|^2, |\mu^{(i)}_{-}|^2 \}\ \forall\ i \\
        =& \max\{\left|\mu_{+}^{(i)}\right|^2\} \ \forall\ i \\
    \end{split}
\end{equation}
\begin{equation}
    \begin{split}
        \therefore\nabla_{\alpha} \rho \big|_{\alpha = 0} &\approx \max\Big\{\frac{\eta^2 \left|\lambda_i\right|^2 - \eta^2 h^2 - \beta^2}{4}\eta\left|\lambda_i\right|^2 + \frac{\eta h\beta - \left(\eta h - \beta\right)^2}{2}\eta\left|\lambda_i\right|^2 \\
        &\qquad - \left(1 + \beta\right)\eta \left|\lambda_i\right|^2 \Big\} \ \forall\ i\\
        &\approx \max\Big\{\frac{\eta^3}{4}\left|\lambda_i\right|^4 - \left(1 + \beta + \frac{3\beta^2}{4}\right) \eta \left|\lambda_i\right|^2 
        %- \frac{3\eta^3\sigma_i^2 h^2}{4} + \frac{3\eta^2\sigma_i^2 h \beta}{2}
        \Big\} \ \forall\ i\\
        &< \max\Big\{\left(\frac{\eta^2}{4}\left|\lambda_i\right|^2 - 1\right)\eta \left|\lambda_i\right|^2 \Big\} \ \forall\ i\\
        % &= 2 \mathcal{R}(\mu_{+}^{(i)} \overline{\mu}_{+}'^{(i)}\big|_{\alpha=0}) \quad\mathcal{R}:\text{real part}; \\
        % &=  2 \mathcal{R} \left((1 - \eta \lambda_i) \left(\frac{\eta \overline{\lambda}_i^2}{1 - \eta \overline{\lambda}_i}\right) \right) \\
        % &= \frac{2 \eta}{|1 - \eta \lambda_i|^2} \mathcal{R} ((1 - \eta \lambda_i)^2 \ \overline{\lambda}_i^2) \\
        % &= \frac{2 \eta}{|1 - \eta \lambda_i|^2} \left(\mathcal{R}(\overline{\lambda}_i^2) + \eta^2 |\lambda_i|^4 - 2 \eta |\lambda_i|^2 \mathcal{R}(\lambda_i) \right)
    \end{split}
\end{equation}
%Therefore, we observe that the sign of $\nabla_{\alpha} \rho (\alpha = 0)$ is determined by, $\mathcal{R}(\overline{\lambda}^2) + \eta^2 |\lambda|^4 - 2 \eta |\lambda|^2 \mathcal{R}(\lambda)$. 
where we have retained only terms up to cubic-order in $\eta, |\beta|$ and $h$. Hence, choosing $\eta\in\left(0, \frac{2}{\operatorname{Im}(\lambda_\text{max})}\right)$, ensures:
% \begin{equation}
%     \begin{split}
%         \nabla_{\alpha} \rho \big|_{\alpha = 0} &= \frac{2 \eta}{|1 - \eta \lambda_i|^2} \left(\eta^2 \lambda_i^4 - \lambda_i^2\right)
%     \end{split}
% \end{equation}
% Choosing , we therefore have,
\begin{equation}
    \nabla_{\alpha} \rho \big|_{\alpha = 0} < 0\ \forall\ i,
\end{equation}
We thus posit, that a choice of a positive $\alpha$ causes the norm of the limiting eigenvalue $\mu_+$ of $F_\text{LEAD}$ to decrease.
\section{Proof of Theorem \ref{th:lead_rate}}\label{app:lead_rate}
\begin{theorem*}
Setting $\eta = \alpha = \textcolor{black}{\frac{{1}}{2 (\sigma_\text{max}(\mathbb{A}) + h)}}$, then we have $\forall\ \epsilon > 0$,
\begin{equation}
\textcolor{black}{
    \Delta_{t+1} \in \mathcal{O}\left(\left(1 - \frac{\sigma_{min}^2}{4 (\sigma_{max} + h)^2} - \frac{(1 + \beta/2)h}{\sigma_{max} + h} + \frac{3h^2}{8(\sigma_{max} + h)^2} \right)^t \Delta_0\right)
    }
\end{equation}
where $\sigma_{max} (\sigma_{min})$ is the largest (smallest) eigenvalue of $\mathbb{A}$, $\Delta_{t+1} := ||\bm{\omega}_{t+1} - \bm{\omega}^{*}||^2_2 + ||\bm{\omega}_{t} - \bm{\omega}^{*}||^2_2$.
\end{theorem*}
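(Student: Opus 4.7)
The plan is to reduce Theorem~\ref{th:lead_rate} to a direct spectral calculation: compute the eigenvalues of $\nabla F_\text{LEAD}(\bm{\omega}^*)$ with the prescribed choice $\beta=0$, $\eta=\alpha=\frac{1}{2\sigma_\text{max}(\mathbb{A})}$, bound their modulus by $\tfrac12+\tfrac12\sqrt{1-\sigma_\text{min}^2/\sigma_\text{max}^2}$, and then invoke Proposition~\ref{prop:convergence} to translate this spectral bound into the claimed linear rate on $\Delta_t$.

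First, I would recall that for the bilinear game $f(\mathbf{X},\mathbf{Y})=\mathbf{X}^{\!\top}\mathbb{A}\mathbf{Y}$, the Jacobian $\nabla\mathbf{v}$ has block form $\bigl[\begin{smallmatrix}0&\mathbb{A}\\-\mathbb{A}^{\!\top}&0\end{smallmatrix}\bigr]$, whose spectrum is exactly $\{\pm i\sigma_j(\mathbb{A})\}_{j=1}^{n}$ (this is a standard consequence of the SVD of $\mathbb{A}$). Thus every $\lambda\in\mathrm{Sp}(\nabla\mathbf{v}(\bm{\omega}^*))$ is purely imaginary with $|\lambda|\in[\sigma_\text{min},\sigma_\text{max}]$.

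Next, I would substitute $\beta=0$ and $\eta=\alpha$ into the formula from Theorem~\ref{th:spectral_evalues}. The discriminant simplifies, since $(1-2\alpha\lambda)^2+4\alpha\lambda = 1+4\alpha^2\lambda^2$, and with $\lambda=i\sigma$ one has $4\alpha^2\lambda^2 = -4\alpha^2\sigma^2$. Setting $\alpha=\frac{1}{2\sigma_\text{max}}$ and writing $r:=\sigma/\sigma_\text{max}\in[\sigma_\text{min}/\sigma_\text{max},1]$ collapses the eigenvalues to the clean form
\begin{equation*}
\mu_{\pm}(\alpha,\eta) \;=\; \tfrac{1}{2}\bigl(1\pm\sqrt{1-r^{2}}\bigr) \;-\; \tfrac{i}{2}\,r .
\end{equation*}
A short computation then gives $|\mu_{\pm}|^{2}=\tfrac12\bigl(1\pm\sqrt{1-r^2}\bigr)$, so $|\mu_+|^2+|\mu_-|^2=1$ and the limiting branch is $\mu_+$. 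Maximising over singular values (i.e.\ minimising $r$) yields
\begin{equation*}
\max_{j,\pm}\,|\mu_{\pm}|^{2} \;=\; \tfrac12+\tfrac12\sqrt{1-\sigma_\text{min}^{2}/\sigma_\text{max}^{2}} ,
\end{equation*}
which is exactly the constant appearing in the theorem.

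Finally, I would feed this spectral bound into Proposition~\ref{prop:convergence}, applied to the update operator $F_\text{LEAD}$ acting on the concatenated state $(\bm{\omega}_{t},\bm{\omega}_{t-1})$. Because $\Delta_t=\|\bm{\omega}_{t+1}-\bm{\omega}^*\|_2^2+\|\bm{\omega}_{t}-\bm{\omega}^*\|_2^2$ is a squared norm, the per-step contraction on $\Delta_t$ is controlled by the squared spectral radius of $\nabla F_\text{LEAD}$, yielding the advertised rate $(\tfrac12+\tfrac12\sqrt{1-\sigma_\text{min}^2/\sigma_\text{max}^2}+\epsilon)^t$ for every $\epsilon>0$. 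I expect the main subtlety to lie precisely here: carefully aligning the statement of Proposition~\ref{prop:convergence} (which bounds $\|\bm{\omega}_t-\bm{\omega}^*\|$ via the spectral radius $\rho$) with the squared quantity $\Delta_t$, and simultaneously arguing that the ``extra'' eigenvalue branch $\mu_-$, introduced by the momentum-like lift to $(\bm{\omega}_t,\bm{\omega}_{t-1})$, does not dominate — it does not, since $|\mu_-|^2\le 1/2\le|\mu_+|^2$. The remaining algebra (simplifying the discriminant and maximising over singular values) is routine once the SVD reduction has turned the matrix problem into a scalar one for each $\sigma_j$.
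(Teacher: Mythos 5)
Your proposal is correct and follows essentially the same route as the paper's own proof: reduce to the spectrum $\{\pm i\sigma_j\}$ of $\nabla\mathbf{v}$, substitute $\beta=0$, $\eta=\alpha=\tfrac{1}{2\sigma_\text{max}}$ into the eigenvalue formula of Theorem~\ref{th:spectral_evalues} to get $|\mu_{\pm}|^2=\tfrac12\pm\tfrac12\sqrt{1-\sigma_j^2/\sigma_\text{max}^2}$, maximise over $j$, and invoke Proposition~\ref{prop:convergence}. Your intermediate simplification of the discriminant to $1+4\alpha^2\lambda^2$ and the explicit form $\mu_{\pm}=\tfrac12\bigl(1\pm\sqrt{1-r^2}\bigr)-\tfrac{i}{2}r$ are slightly more detailed than the paper's presentation but lead to the identical conclusion.
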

\textit{Proof}: From Eq.~(\ref{eq:mu_pm}), we recall that the eigenvalues of $\nabla F_\text{LEAD}\left(\bm{\omega}^*\right)$ for the quadratic game are,
\begin{equation}
    \mu^{(i)}_{\pm} (\alpha, \beta, \eta) = \frac{(1 - (\alpha + \eta) \lambda_i + \beta - \eta h)}{2} \left( 1 \pm \sqrt{1 - \frac{4\left(\beta - \eta \lambda_i\right)}{(1 - (\alpha + \eta) \lambda_i + \beta - \eta h)^2}} \right)
\end{equation}
with $\lambda_i \in \text{Sp}(\text{off-diag}[\nabla {\bm{v}}(\omega^*)])$. Now, since in the quadratic-game setting considered, we have,
\begin{equation}
    \begin{split}
        \text{off-diag}[\nabla {\bm{v}}(\omega^*)] = \mathbb{D}_\text{q}\mathbb{P} &= 
        \begin{bmatrix}
            0 & \mathbb{A}\\
            -\mathbb{A}^T & 0 \\
        \end{bmatrix} \\
    \end{split}
\end{equation}
hence, $\lambda_i = \pm i\sigma_i$ with $\sigma_i$ being the singular values of $\mathbb{A}$. This, then allows us to write,
\begin{equation}
    \mu^{(i)}_{\pm} (\alpha, \beta, \eta) = \frac{(1 - (\alpha + \eta) (\pm i\sigma_i) + \beta \textcolor{black}{- \eta h})}{2} \left( 1 \pm \sqrt{1 - \frac{4\left(\beta - \alpha (\pm i\sigma_i)\right)}{(1 - (\alpha + \eta) (\pm i\sigma_i) + \beta \textcolor{black}{- \eta h})^2}} \right)
\end{equation}
According to Proposition~\ref{prop:convergence}, the convergence behavior of LEAD is determined as, $\Delta_{t+1} \le \mathcal{O} (\rho + \epsilon) \Delta_{t}\ \forall\ \epsilon > 0$, where (setting $\eta = \alpha$),
\begin{equation}
    \begin{split}
        \rho :=& \max\{|\mu_+^{(i)}|^2, |\mu_-^{(i)}|^2\} \ \forall\ i\\
        =&\ |\mu_+^{(i)}|^2\ \forall\ i \\
    \end{split}
\end{equation}
\textcolor{black}{
Now assuming that $\eta$ is small enough, such that, $\eta^3 \approx 0$ and $\beta^2 \approx 0$, we have,
\begin{equation}
    \begin{split}
        \rho \approx &\ 1 - \eta^2\sigma_{i}^2 + \frac{3}{2}\eta^2 h^2 - \left(2 + \beta\right)\eta h \\
    \end{split}
\end{equation}}
% Here, $r_\text{LEAD} = 3\eta^2\sigma_{\min}^2 - \frac{\beta^2}{2} - \frac{3}{2}\eta^2 h^2 + \left(2 + \beta\right)\eta h$, is defined to be the \textit{rate of convergence} of LEAD. 
Furthermore, using a learning rate $\eta$ as prescribed by Proposition~\ref{prop:norm_eigen}, \textcolor{black}{such as $\eta = \alpha = \textcolor{black}{\frac{{1}}{2 (\sigma_\text{max}(\mathbb{A}) + h)}}$ we find,
\begin{equation}
    \begin{split}
        r_\text{LEAD} &=  \frac{\sigma_{min}^2}{4 (\sigma_{max} + h)^2} + \frac{(1 + \beta/2)h}{\sigma_{max} + h} - \frac{3h^2}{8(\sigma_{max} + h)^2}
    \end{split}
\end{equation}
Therefore,
\begin{equation}
    \begin{split}
        \Delta_{t+1} &\le \mathcal{O}\left(\left(1 - r_\text{LEAD}\right)^t \Delta_0\right) \\
        &= \mathcal{O}\left(\left(1 - \frac{\sigma_{min}^2}{4 (\sigma_{max} + h)^2} - \frac{(1 + \beta/2)h}{\sigma_{max} + h} + \frac{3h^2}{8(\sigma_{max} + h)^2} \right)^t \Delta_0\right)\\
    \end{split}
\end{equation}}
    
where $\Delta_{t+1} := ||\bm{\omega}_{t+1} - \bm{\omega}^{*}||^2_2 + ||\bm{\omega}_{t} - \bm{\omega}^{*}||^2_2$.

\section{Robustness to the Discretization Parameter}
\textcolor{black}{
In this section we study the effect of the discretization parameter $\delta$,  defined in equation \ref{eq:symplectic_hyper} and Proposition \ref{prop:eom_discrete} on the quadratic game defined in \ref{eq:bilinear_adversarial_interpolate},
\begin{equation*}
\smash{
\min_{\boldsymbol{x}} \max_{\boldsymbol{y}}  \boldsymbol{x}^T (\boldsymbol{\gamma} \boldsymbol{A}) \boldsymbol{y} + \boldsymbol{x}^T ((\textbf{I} - \boldsymbol{\gamma})\boldsymbol{B}_1) \boldsymbol{x} - \boldsymbol{y}^T ((\textbf{I} - \boldsymbol{\gamma})\boldsymbol{B}_2) \boldsymbol{y}%, 
}.
\end{equation*}
In Figure \ref{fig:disc_grid}, we provide a grid of experiments to study the effect of discretization on the convergence of the quadratic game explored in section 7.1 (Adversarial vs Cooperative games). We observe that for every level of $\gamma_{max}$ that changes the dynamics of the game from adversarial to cooperative game, LEAD allows for a wide range of discretization steps that lead to convergence. Note that the discretization step and consequently the learning rate can be viewed as a hyper-parameter and consequently require tuning. 
}
\begin{figure}[ht]
\centering
    \includegraphics[width=0.6\linewidth]{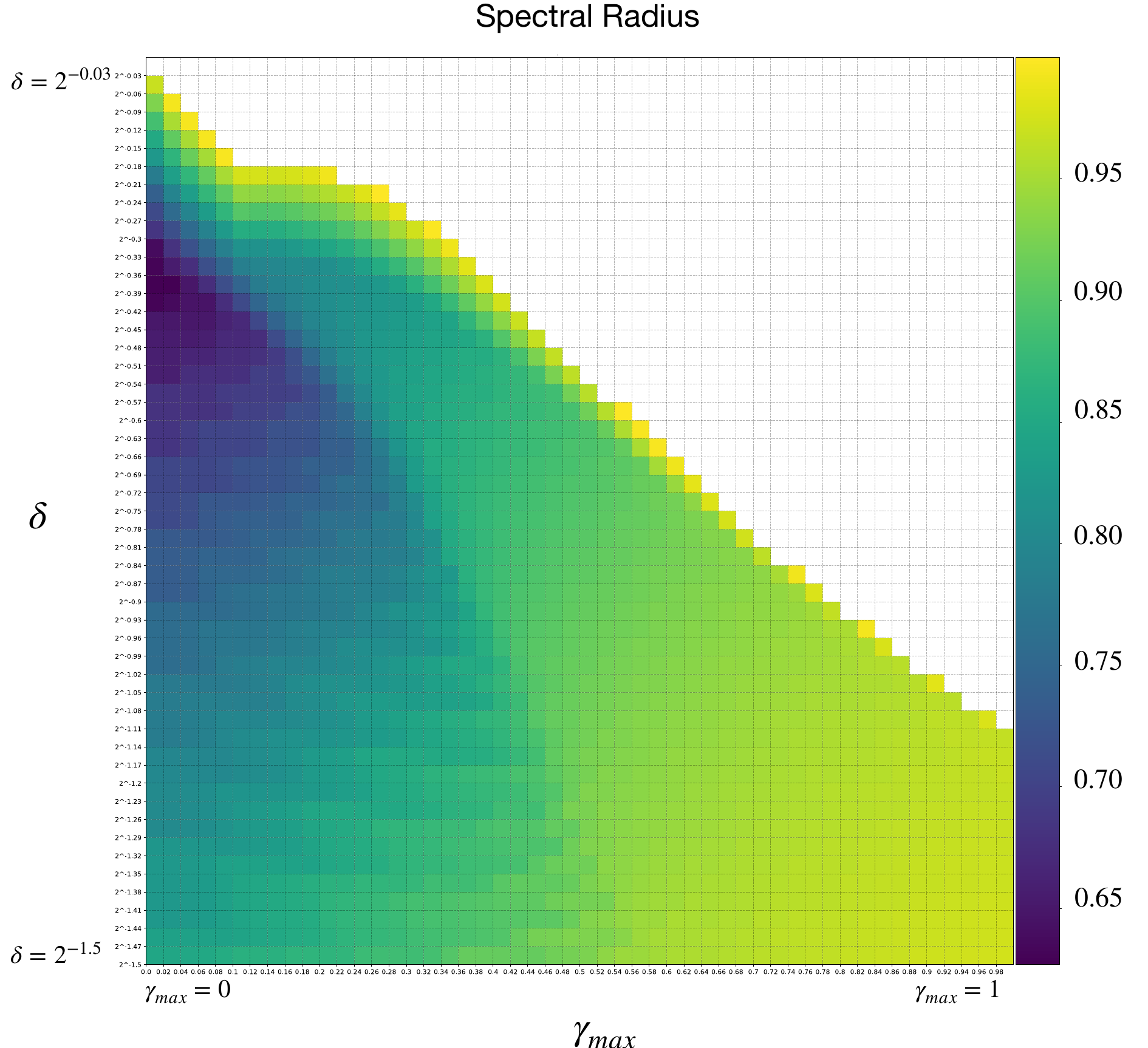}
    \caption{\textcolor{black}{The effect of discretization parameter ($\delta$) on the convergence of LEAD on the quadratic min-max game defined in \ref{eq:bilinear_adversarial_interpolate}. The x-axis changes the game dynamics from cooperative to adversarial. The y-axes shows the range of $\delta$. The color shows the spectral radius which corresponds to the convergence rate (smaller spectral radius leads to faster convergence). Convergent dynamics, require a spectral radius that is smaller than one. We plot experiments with non-convergent dynamics (spectral radius > 1) in white. Darker colors correspond to faster convergence. We observe that LEAD allows for a wide range of $\delta$ that result in convergent dynamics. Thus, LEAD is robust against the discretization parameter.}}
 \label{fig:disc_grid}
\end{figure}

\section{LEAD-Adam}
Since Adam algorithm is commonly used in large-scale experiments, we extend LEAD to be used with the Adam algorithm. 

\begin{algorithm}[ht]
  \begin{algorithmic}[1]
      \State {\bfseries Input:} 
              learning rate $\eta$, momentum $\beta$, coupling coefficient $\alpha$.
    \State \textbf{Initialize: }{$x_0 \gets x_{init}$, $y_0 \gets y_{init}$, $t \gets 0$, $m_0^x \gets 0$, $v_0^x \gets 0$ $m_0^y \gets 0$, $v_0^y \gets 0$}
    \While{not converged}
    \State $t \gets t+1$
    \State $g_x \gets \nabla_x f(x_t, y_t)$
    \State $g_{xy} \Delta y \gets \nabla_y(g_x) (y_t - y_{t-1})$
    \State $g_{t}^x \gets g_{xy} \Delta y + g_x$
    \State $m_t^x \gets \beta_1 . m_{t-1}^x + (1 - \beta_1). g_t^x$ 
    \State $v_t^x \gets \beta_2 . v_{t-1}^x + (1 - \beta_2). (g_t^x)^2$ 
    \State $\hat{m}_t \gets {m_t} / {(1 - \beta_1^t)}$
    \State $\hat{v}_t \gets {v_t} / {(1 - \beta_2^t)}$
    \State $x_{t+1} \gets x_t - \eta \ \hat{m}_t / (\sqrt{\hat{v}_t } + \epsilon) $
    \State $g_y \gets \nabla_y f(x_{t+1}, y_t)$
    \State $g_{xy} \Delta x \gets \nabla_x(g_y) (x_{t+1} - x_{t})$
    \State $g_{t}^y \gets g_{xy} \Delta x + g_y$
    \State $m_t^y \gets \beta_1 . m_{t-1}^y + (1 - \beta_1). g_t^y$ 
    \State $v_t^y \gets \beta_2 . v_{t-1}^y + (1 - \beta_2). (g_t^y)^2$ 
    \State $\hat{m}_t^y \gets {m_t^y} / {(1 - \beta_1^t)}$
    \State $\hat{v}_t^y \gets {v_t^y} / {(1 - \beta_2^t)}$
    \State $y_{t+1} \gets y_t + \eta \ \hat{m}_t^y / (\sqrt{\hat{v}_t^y } + \epsilon) $
    \EndWhile
    \State \textbf{return} $(x, y)$
  \end{algorithmic}
    \caption{Least Action Dynamics Adam (LEAD-Adam)}\label{alg:lead_adam}
\end{algorithm}

\section{8-Gaussians Generation}\label{app:gan_toy}

We compare our method LEAD-Adam with vanilla-Adam~\citep{kingma2014adam} on the generation task of a mixture of 8-Gaussians. Standard optimization algorithms such as vanilla-Adam suffer from mode collapse in this simple task, implying the generator cannot produce samples from one or several of the distributions present in the real data. Through Figure~\ref{fig:8_G}, we demonstrate that LEAD-Adam fully captures all the modes in the real data in both saturating and non-saturating losses. 
% LEAD-Adam's commendable performance on the harder to train non-saturating GAN formulation leads us to believe that LEAD can perform well even beyond its original zero-sum formulation. 

\begin{figure}[h!]
\centering
      \includegraphics[width=0.4\linewidth]{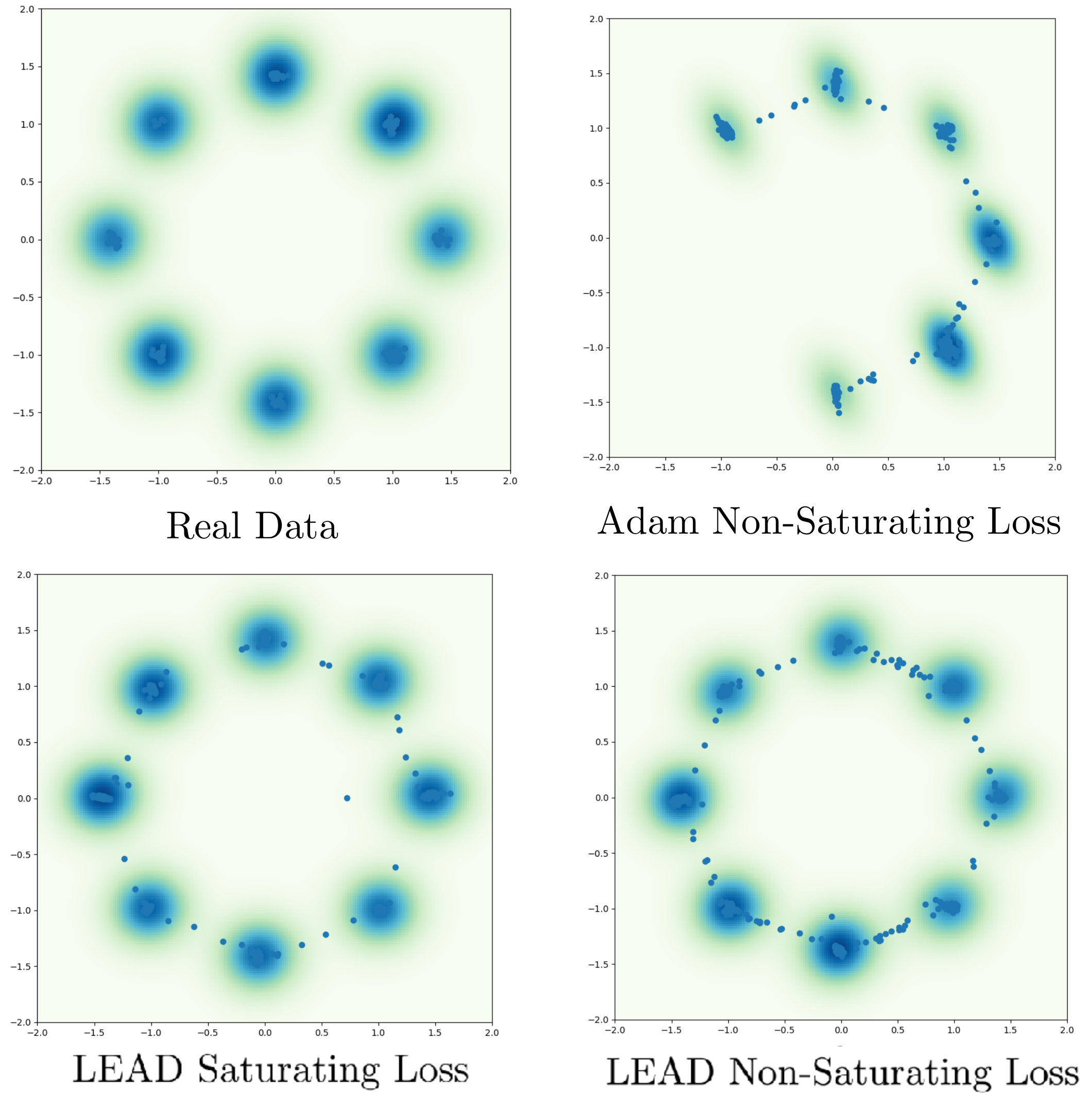}
  \caption{Performance of LEAD-Adam on the generation task of 8-Gaussians. All samples are shown after 10k iterations. Samples generated using Adam exhibit mode collapse, while LEAD-Adam does not suffer from this issue.
  \small
  }
\label{fig:8_G}
\end{figure}

\section{Experiments and Implementation Details}\label{app:experiment}

\subsection{Mixture of Eight Gaussians}
\textbf{Dataset}
The real data is generated by 8-Gaussian distributions their mean are uniformly distributed around the unit circle and their variance is 0.05. The code to generate the data is included in the source code. 

\textbf{Architecture}
The architecture for Generator and Discriminator, each consists of four layers of affine transformation, followed by ReLU non-linearity. The weight initialization is default PyTorch's initialization scheme. See a schematic of the architecture in Table \ref{tab:8_g_arch}.

\begin{table}[h]
\caption{Architecture used for the Mixture of Eight Gaussians.}\label{tab:8_g_arch}
\centering
\resizebox{.5\textwidth}{!}
{
\begin{tabular}{c@{\hskip 5em}c}\toprule
\textbf{Generator} & \textbf{Discriminator}\\\toprule
\textit{Input:} $z \in \mathbb{R}^{64} \sim \mathcal{N}(0, I) $ &
\textit{Input:} $x \in \mathbb{R}^{2}$  \\ %\hdashline 
Linear ($64 \rightarrow 2000$)	& Linear ($2 \rightarrow 2000$)\\
ReLU 	& ReLU\\
Linear ($2000 \rightarrow 2000$)	& Linear ($2000 \rightarrow 2000$)\\
ReLU 	& ReLU \\
Linear ($2000 \rightarrow 2000$) & Linear ($2000 \rightarrow 2000$) \\
ReLU  				& ReLU \\
Linear ($2000 \rightarrow 2$)  & Linear ($2000 \rightarrow 1$) \\ 
\bottomrule \\\end{tabular}
}

\end{table}

% \paragraph{Saturating and Non-Saturating Loss}

\paragraph{Other Details }
We use the Adam~\citep{kingma2014adam} optimizer on top of our algorithm in the reported results. Furthermore, we use batchsize of 128. 

\subsection{CIFAR 10 DCGAN}
\textbf{Dataset}
The CIFAR10 dataset is available for download at the following link; \url{https://www.cs.toronto.edu/~kriz/cifar.html}

\textbf{Architecture}
The discriminator has four layers of convolution with LeakyReLU and batch normalization. Also, the generator has four layers of deconvolution with ReLU and batch normalization. See a schematic of the architecture in Table \ref{tab:dcgan_arch}.

\begin{table}[h]	\centering
 \caption{Architecture used for CIFAR-10 DCGAN.}\label{tab:dcgan_arch}
\resizebox{.8\textwidth}{!}
{
\begin{tabular}{c@{\hskip 5em}c}\toprule
\textbf{Generator} & \textbf{Discriminator}\\\toprule
\textit{Input:} $z \in \mathbb{R}^{100} \sim \mathcal{N}(0, I) $ &
\textit{Input:} $x \in \mathbb{R}^{3{\times}32{\times}32}$  \\ %\hdashline 
conv. (ker: $4{\times}4$, $100 \rightarrow 1024 $; stride: $1$; pad: $0$) & conv. (ker: $4{\times}4$, $3 \rightarrow 256 $; stride: $2$; pad: $1$)\\
Batch Normalization & LeakyReLU\\
ReLU 	& conv. (ker: $4{\times}4$, $256 \rightarrow 512 $; stride: $2$; pad: $1$)\\
conv. (ker: $4{\times}4$, $1024 \rightarrow 512 $; stride: $2$; pad: $1$)	& Batch Normalization\\
Batch Normalization 	& LeakyReLU \\
ReLU & conv. (ker: $4{\times}4$, $512 \rightarrow  1024$; stride: $2$; pad: $1$) \\
conv. (ker: $4{\times}4$, $512 \rightarrow 256 $; stride: $2$; pad: $1$)	& Batch Normalization\\
Batch Normalization 	& LeakyReLU \\
ReLU &  conv. (ker: $4{\times}4$, $1024 \rightarrow 1 $; stride: $1$; pad: $0$)\\
conv. (ker: $4{\times}4$, $256 \rightarrow 3 $; stride: $2$; pad: $1$) &  \\ 
Tanh  & Sigmoid\\ 
\bottomrule \\\end{tabular}
}
\end{table}

\textbf{Other Details}
For the baseline we use Adam with $\beta_1$ set to 0.5 and  $\beta_2$ set to 0.99. Generator's learning rate is 0.0002 and discriminator's learning rate is 0.0001. The same learning rate and momentum were used to train LEAD model. We also add the mixed derivative term with $\alpha_d=0.3$ and $\alpha_g=0.0$.

The baseline is a DCGAN with the standard non-saturating loss (non-zero sum formulation).
In our experiments, we compute the FID based on 50,000 samples generated from our model vs 50,000 real samples.

\textbf{Samples}

\begin{figure*}[ht]
\centering
  \includegraphics[width=0.8\textwidth]{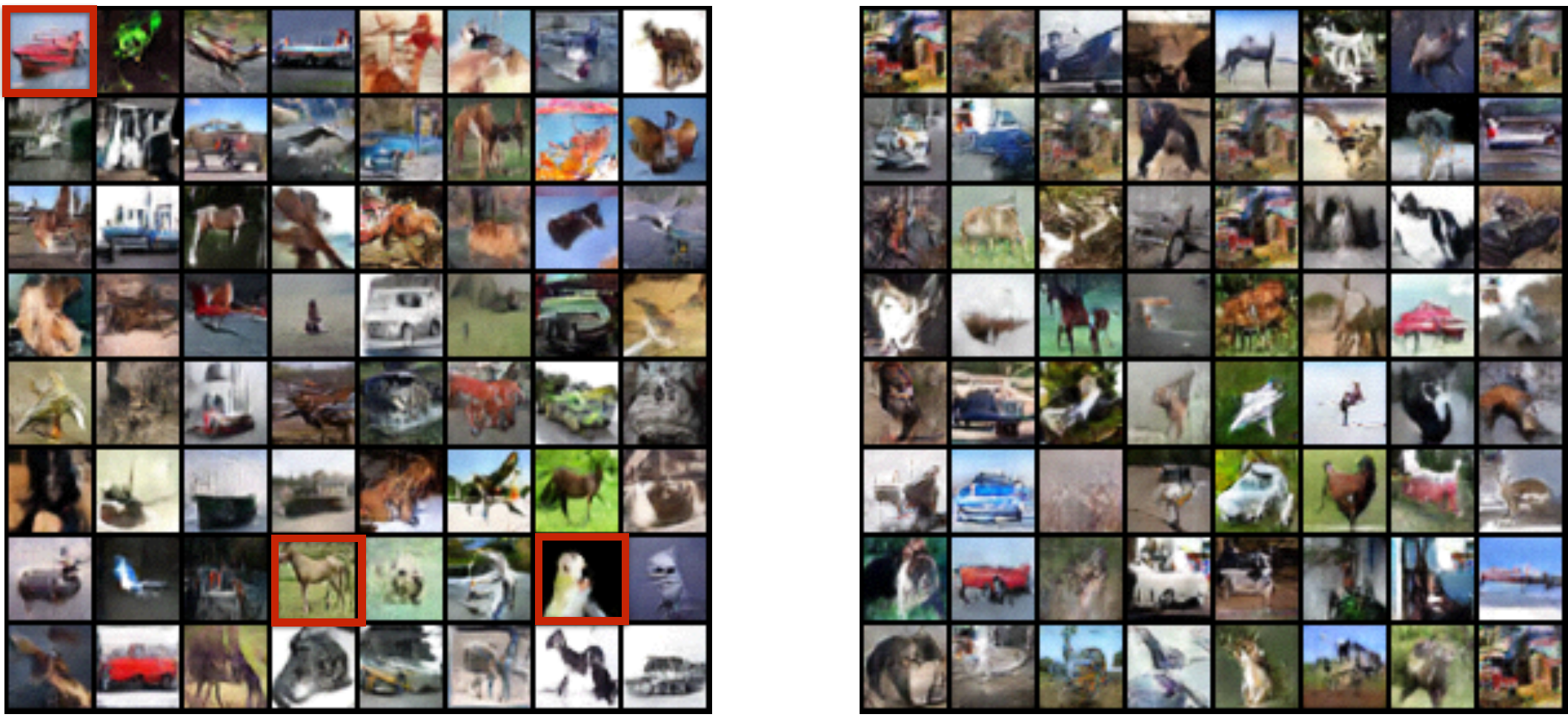}

  \caption{ 
  \small
  Performance of LEAD on CIFAR-10 image generation task on a DCGAN architecture. \textbf{Left}: LEAD achieves FID 19.27. \textbf{Right}: Vanilla Adam achieves FID 24.38. LEAD is able to generate better sample qualities from several classes such as ships, horses and birds (red). Best performance is reported after 100 epochs.
  }\label{fig:cifar}
\end{figure*}

\subsection{CIFAR 10 ResNet}
\textbf{Dataset}
The CIFAR10 dataset is available for download at the following link; \url{https://www.cs.toronto.edu/~kriz/cifar.html}

\textbf{Architecture}
See Table \ref{tab:resnet_arch} for a schematic of the architecture used for the CIFAR10 experiments with ResNet.

\begin{table}[h]\centering	
\caption{ResNet blocks used for the ResNet architectures (see Table~\ref{tab:resnet_arch}).}
\resizebox{0.99\textwidth}{!}
{
\begin{minipage}[b]{0.5\hsize}\centering  \begin{tabular}{@{}c@{}}\toprule
\textbf{Gen--Block}\\\toprule
	\multicolumn{1}{l}{\textit{Shortcut}:} \\
	Upsample($\times 2$) \\  %\hdashline 
		\multicolumn{1}{l}{\textit{Residual}:} \\
	Batch Normalization \\ 
	ReLU \\
	Upsample($\times 2$) \\
	conv. (ker: $3{\times}3$, $256 \rightarrow 256 $; stride: $1$; pad: $1$) \\
    Batch Normalization \\ 
	ReLU \\
	conv. (ker: $3{\times}3$, $256 \rightarrow 256 $; stride: $1$; pad: $1$) \\
\bottomrule
\end{tabular}
\end{minipage} \hfill
\begin{minipage}[b]{0.5\hsize}\centering  \begin{tabular}{@{}c@{}}\toprule
\textbf{Dis--Block}\\\toprule
	\multicolumn{1}{l}{\textit{Shortcut}:} \\
    downsample\\
	conv. (ker: $1{\times}1$, $3_{\ell=1} / 128_{\ell \neq 1} \rightarrow 128 $; stride: $1$) \\
	Spectral Normalization \\
	$[$AvgPool (ker:$2{\times}2$, stride:$2$)$]$, if $\ell \neq 1$ \\ %\hdashline 
		\multicolumn{1}{l}{\textit{Residual}:} \\
	$[$ ReLU $]$, if $\ell \neq 1$ \\
	conv. (ker: $3{\times}3$, $3_{\ell=1} / 128_{\ell \neq 1}  \rightarrow 128 $; stride: $1$; pad: $1$) \\
	Spectral Normalization \\
	ReLU \\
	conv. (ker: $3{\times}3$, $128 \rightarrow 128 $; stride: $1$; pad: $1$) \\
	Spectral Normalization \\
	AvgPool (ker:$2{\times}2$ )\\
\bottomrule
\end{tabular}
\end{minipage}
}

\label{tab:resblock}
\end{table}

\begin{table}[h]	\centering
% \ras{1.2} 
\caption{ ResNet architectures used for experiments on CIFAR10.}\label{tab:resnet_arch}
\begin{tabular}{c@{\hskip 5em}c}\toprule
\textbf{Generator} & \textbf{Discriminator}\\\toprule
\textit{Input:} $z \in \mathbb{R}^{64} \sim \mathcal{N}(0, I) $ &
\textit{Input:} $x \in \mathbb{R}^{3{\times}32{\times}32} $  \\ %\hdashline 
Linear($64 \rightarrow 4096$)	& D--ResBlock \\
G--ResBlock 	& D--ResBlock \\
G--ResBlock 	& D--ResBlock \\
G--ResBlock 	& D--ResBlock \\
Batch Normalization & ReLU \\
ReLU  				& AvgPool (ker:$8{\times}8$ ) \\
conv. (ker: $3{\times}3$, $256 \rightarrow 3$; stride: $1$; pad:1) & 
Linear($128 \rightarrow 1$) \\
 $Tanh(\cdot)$  & Spectral Normalization \\ 
\bottomrule \\\end{tabular}
\end{table}

\paragraph{Other Details}
The baseline is a ResNet with non-saturating loss (non-zero sum formulation). Similar to \citep{miyato2018spectral}, for every time that the generator is updated, the discriminator is updated 5 times. For both the Baseline SNGAN and LEAD-Adam we use a $\beta_1$ of 0.0 and $\beta_2$ of 0.9 for Adam. Baseline SNGAN uses a learning rate of 0.0002 for both the generator and the discriminator. LEAD-Adam also uses a learning rate of 0.0002 for the generator but 0.0001 for the discriminator. LEAD-Adam uses an $\alpha$ of 0.5 and 0.01 for the generator and the discriminator respectively. Furthermore, we evaluate both the baseline and our method on an exponential moving average of the generator's parameters.

In our experiments, we compute the FID based on 50,000 samples generated from our model vs 50,000 real samples and reported the mean and variance over 5 random runs. We have provided pre-trained models as well as the source code for both LEAD-Adam and Baseline SNGAN in our GitHub repository.

\textbf{Samples}

\begin{figure}[ht]
\centering
      \includegraphics[width=0.50\linewidth]{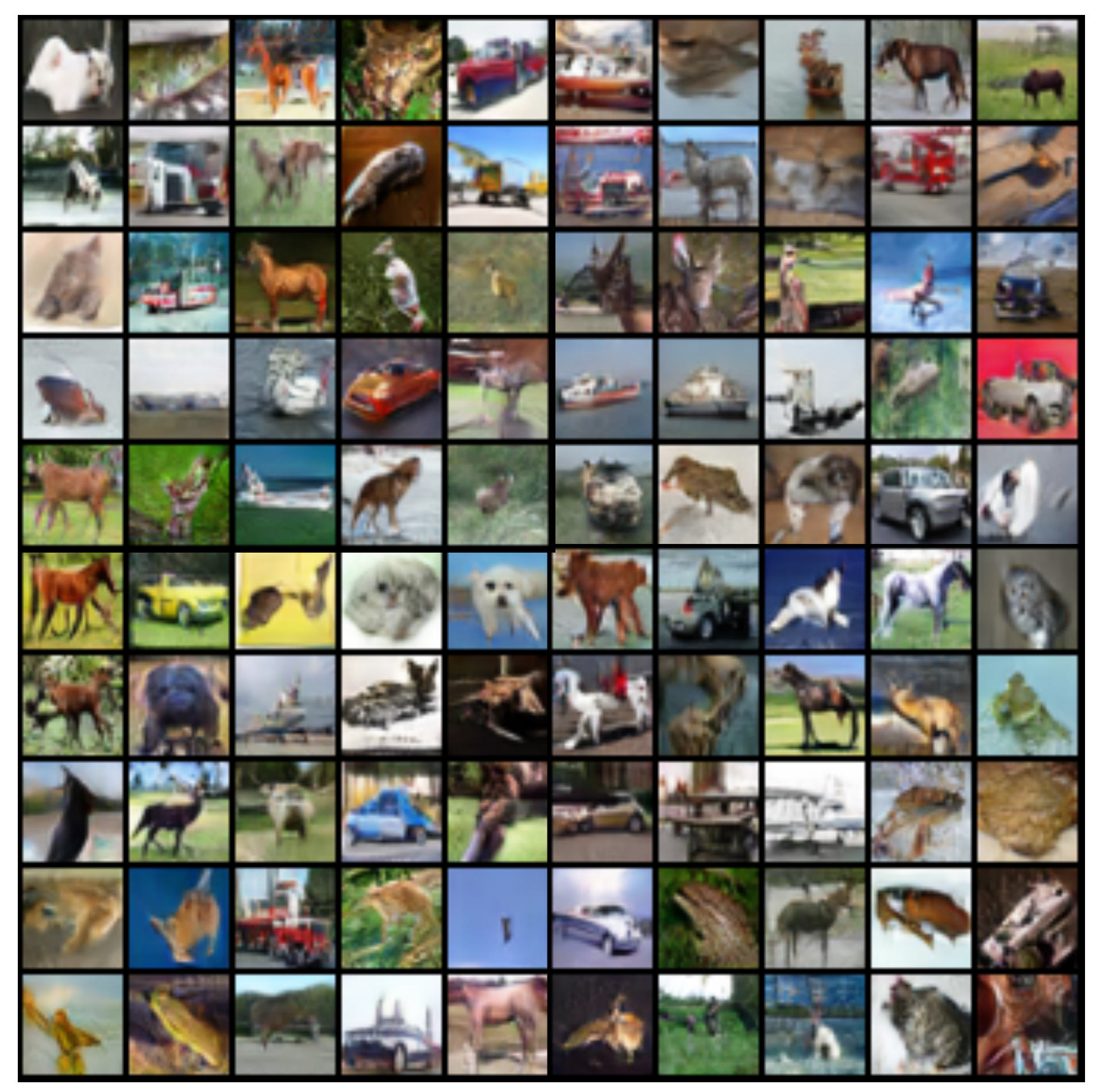}
  \caption{Generated sample of LEAD-Adam on CIFAR-10 after 50k iterations on a ResNet architecture. We achieve an FID score of 10.49 using learning rate $2e-4$ for the generator and the discriminator, $\alpha$ for the generator is set to 0.01 and for the discriminator is set to 0.5.  
  \small
  }
\label{fig:resnet_sample}
\end{figure}

\section{Comparison to other methods}\label{app:comparison}
In this section we compare our method with several other second order methods in the min-max setting. 
% \clearpage
\begin{table}[h!]\centering
\caption[Caption for LOF]{Comparison of several second-order methods in min-max optimization. Each update rule, corresponding to a particular row, can be constructed by adding cells in that row from Columns 4 to 7 and then multiplying that by the value in Column 1. Furthermore, $\Delta x_{k+1} = x_{k+1} - x_{k}$, while $\mathcal{C} = \left(\text{I} + \eta^2\nabla_{xy}^2f\nabla_{yx}^2f\right)$. We compare the update rules of the first player\footnotemark~for the following methods: Gradient Descent-Ascent (GDA), Least Action Dynamics (LEAD, ours), Symplectic Gradient Adjustment (SGA), Competitive Gradient Descent (CGD), Consensus Optimization (CO), Follow-the-Ridge (FtR) and Learning with Opponent Learning Awareness (LOLA), in a zero-sum game.}
\label{tab:second_order_full}
\setcitestyle{numbers}
\scalebox{0.87}{
\begin{tabular}{c c c c c c c} 
\specialrule{.1em}{.05em}{.05em} 
& & Coefficient & Momentum & Gradient & Interaction-xy & Interaction-xx
\\ \hline
GDA & $\Delta x_{k + 1}=$ & 1 & 0 & $-\eta\nabla_x f$ & $-\eta\nabla_x f$ & 0 \\ 
\hline
\textbf{LEAD} & $\mathbf{\Delta x_{k + 1}=}$ & \textbf{1} & $\mathbf{\beta\Delta x_k}$ & $\mathbf{-\eta\nabla_x f}$ & $\mathbf{-\alpha\nabla^2_{xy}f \Delta y_k}$ & 0 \\
\hline
SGA\textsuperscript{\citep{balduzzi2018mechanics}} & $\Delta x_{k + 1}=$ & 1 & 0 & $-\eta\nabla_x f$ & $-\eta\gamma\nabla_{xy}^2 f \nabla_y f$ & 0 \\ 
\hline
CGD\textsuperscript{\citep{schafer2019competitive}} & $\Delta x_{k + 1}=$ & $\mathcal{C}^{-1}$ & 0 & $-\eta\nabla_x f$ & $-\eta^2\nabla_{xy}^2 f \nabla_y f$ & 0 \\ 
\hline
CO\textsuperscript{\citep{mescheder2017numerics}} & $\Delta x_{k + 1}=$ & 1 & 0 & $-\eta\nabla_xf$ & $-\eta\gamma\nabla^2_{xy}f\nabla_y f$ & $-\eta\gamma\nabla^2_{xx}f\nabla_x f$ \\ 
\hline
FtR\textsuperscript{\citep{wang2019solving}} & $\Delta y_{k + 1}=$ & 1 & 0 & $\eta_y\nabla_y f$ &                $\eta_x\left(\nabla^2_{yy}f\right)^{-1}\nabla^2_{yx}f\nabla_x f$ & 0 \\ 
\hline
LOLA\textsuperscript{\citep{foerster2018learning}} & $\Delta x_{k + 1}=$ & 1 & 0 & $-\eta\nabla_xf$ & $-2\eta\alpha\nabla_{xy}f\nabla_yf$ & 0 \\ \specialrule{.1em}{.05em}{.05em}
\end{tabular}
}
\\
\end{table}

 \footnotetext{For FtR, we provide the update for the second player given the first player performs gradient descent. Also note that in this table SGA is simplified for the two player zero-sum game. Non-zero sum formulation of SGA such as the one used for GANs require the computation of $\mathbb{J} \mathbf{v}, \mathbb{J}^{\top} \mathbf{v}$.}

The distinction of LEAD from SGA and LookAhead, can be understood by considering the 1\textsuperscript{st}-order approximation of $x_{k+1} = x_{k} - \eta\nabla_x f\left(x_k, y_k + \eta\Delta y_k\right)$, where $\Delta y_k = \eta\nabla_y f\left(x_k + \eta\Delta x, y_k\right)$. 
 
This gives rise to: 
\begin{align}
    x_{k+1} &= x_k - \eta\nabla_x f\left(x_k, y_k\right) - \eta^2\nabla_{xy}^2 f\left(x_k,y_k\right)\Delta y, \\
    y_{k+1} &= y_k + \eta\nabla_y f\left(x_k, y_k\right) + \eta^2\nabla_{xy}^2 f\left(x_k, y_k\right)\Delta x,
\end{align}
 
with $\Delta x, \Delta y$ corresponding to each player accounting for its opponent's potential next step. However, SGA and LookAhead additonally \textit{model} their opponent as \textit{naive} learners i.e. $\Delta x = -\nabla_x f(x_k, y_k)$, $\Delta y = \nabla_y f(x_k, y_k)$. On the contrary, our method does away with such specific assumptions, instead modeling the opponent based on its most recent move.
 
Furthermore, there is a resemblance between LEAD and OGDA that we would like to address. The 1\textsuperscript{st} order Taylor expansion of the difference in gradients term of OGDA yields the update (for $x$):
\begin{equation}
    x_{k+1} = x_{k} - \eta \nabla_x f - \eta^2\nabla_{xy}^2 f \nabla_y f + \eta^2\nabla_{xx}^2 f \nabla_x f,     
\end{equation}

which contains an extra 2\textsuperscript{nd} order term $\nabla_{xx}^2 f$ compared to ours. As noted in \cite{schafer2019competitive}, the $\nabla_{xx}^2 f$ term does not systematically aid in curbing the min-max rotations, rather causing convergence to non-Nash points in some settings. For e.g., let us consider the simple game $f (x, y) = \gamma (x^2 - y^2)$, where $x, y, \gamma$ are all scalars, with the Nash equilibrium of this game located at $\left(x^{*}=0, y^{*}=0\right)$. For a choice of $\gamma\ge6$, OGDA fails to converge for any learning rate while methods like LEAD, Gradient Descent Ascent (GDA) and CGD (\cite{schafer2019competitive}) that do not contain the $\nabla_{xx}f (\nabla_{yy}f)$ term do exhibit convergence. See Figure \ref{fig:compare_ogda} and \cite{schafer2019competitive} for more discussion.
% \clearpage
\begin{figure}[h!]
\centering
      \includegraphics[width=0.95\linewidth]{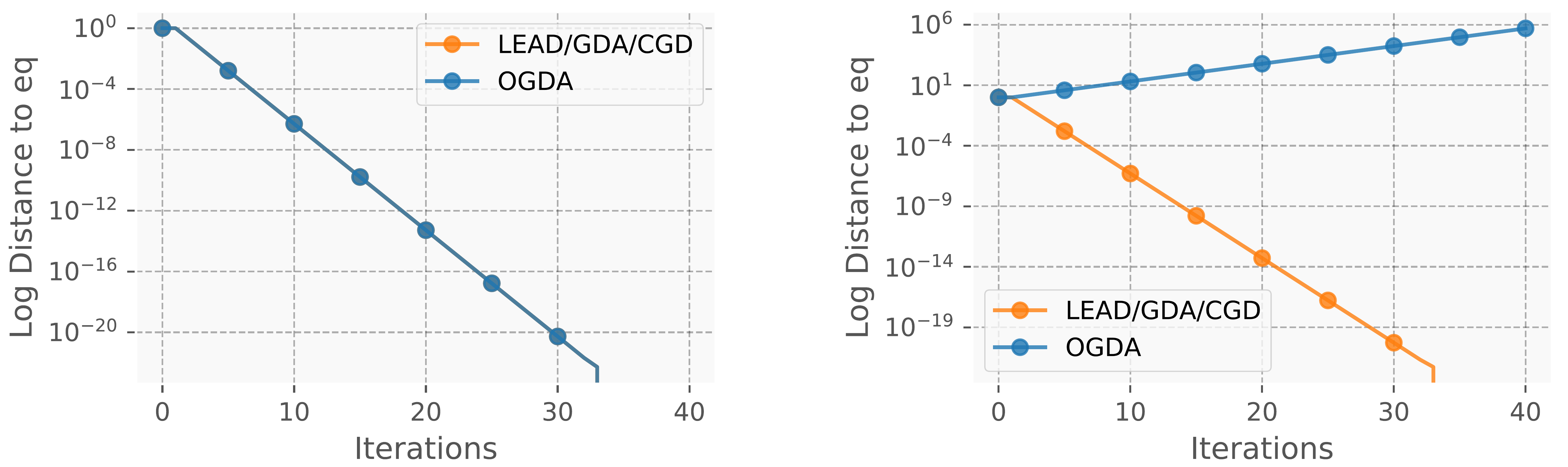}
  \caption{Figure depicting the convergence/divergence of several algorithms on the game of $f (x, y) = \gamma (x^2 - y^2)$ (Nash equilibrium at $x^{*} = 0, y^{*} = 0$). \textbf{Left}: For $\gamma=1$, OGDA and LEAD/GDA/CGD (overlaying) are found to converge to the Nash eq. \textbf{Right}: For $\gamma=6$, we find that OGDA fails to converge while LEAD/GDA/CGD (overlaying) converge. We conjecture that the reason behind this observation is the existence of  $\nabla_{xx}^2 f$ term in the optimization algorithm of OGDA. 
  }
\label{fig:compare_ogda}
\end{figure}

\end{document}